\setlist{leftmargin=10mm}
\def\algbackskip{\hskip-\ALG@thistlm}
\def\pr{\mathrm{Pr}}
\def\E{\mathbb{E}}
\def\P{\mathbb{P}}
\def\argmax{Argmax}
\def \E{\mathbb{E}}
\def\R{\mathbb{R}}
\def\cA{\mathcal{A}}
\def\cD{\mathcal{D}}
\def\cO{\mathcal{O}}
\def\cM{\mathcal{M}}
\def\cN{\mathcal{N}}
\def\cR{\mathcal{R}}
\newtheorem{theorem}{Theorem}
\newtheorem{lemma}[theorem]{Lemma}
\newtheorem{definition}[theorem]{Definition}
\newtheorem{example}[theorem]{Example}
\newtheorem*{remark}{Remark}
\newtheorem{remark-star}{Remark}
\newtheorem{remark-star-1}{Remark}
\newtheorem{corollary}[theorem]{Corollary}
\newtheorem{proposition}[theorem]{Proposition}
\newtheorem*{proof-sketch}{Proof Sketch}
\newcommand{\explainup}[2]{\overset{\mathclap{\underset{\downarrow}{#2}}}{#1}}
\author[1]{Yuqing Zhu}
\author[1]{Yu-Xiang Wang}
\affil[1]{Computer Science Department, UC Santa Barbara}
\title{Adaptive Private-K-Selection with Adaptive K and Application to Multi-label PATE}
\begin{document}


\maketitle

\begin{abstract}
	
We provide an end-to-end Renyi DP based-framework for differentially private top-$k$ selection.  Unlike previous approaches, which require a data-independent choice on $k$, we propose to privately release a data-dependent choice of $k$ such that the gap between $k$-th and the $(k+1)$st ``quality'' is large. This is achieved by a novel application of the Report-Noisy-Max.
Not only does this eliminate one hyperparameter, the adaptive choice of $k$ also certifies the stability of the top-$k$ indices in the unordered set so we can release them using a variant of propose-test-release (PTR) without adding noise. We show that our construction improves the privacy-utility trade-offs compared to the previous top-$k$ selection algorithms theoretically and empirically. Additionally, we apply our algorithm to ``Private Aggregation of Teacher Ensembles (PATE)'' in \textit{multi-label} classification tasks with a large number of labels and show that it leads to significant performance gains.  
	
\end{abstract}

\section{Introduction}
\vspace{-2mm}


The private top-$k$ selection problem~\citep{durfee2019, carvalho2020differentially, dwork2018differentially, hardt2013beyond} is one of the most fundamental problems in privacy-preserving data analysis. For example, it is a key component in several more complicated differentially private tasks, including private model selection, heavy hitter estimation and dimension reduction.  More recently, the private selection algorithm (Report-Noisy-Max) is combined with the ``Private Aggregation of Teacher Ensembles (PATE)''~\citep{papernot2017,papernot2018,bassily2018model} to build a knowledge transfer framework for model agnostic private learning.  In this work, we focus on the counting problem. Given a finite set of candidates and associated counts for each candidate, our goal is to design practical differential private algorithms that can return the \textit{unordered} top-$k$ candidates.
 
Unlike previous approaches on private top-$k$ selections, which assume $k$ is predetermined and data-independent, we consider choosing $k$ adapted to the data itself.  
Why would an adaptive choice of $k$ be preferred? We give two reasons.
First, a data-dependent choice of $k$ captures the informative structure of the dataset. To illustrate this matter, consider the following sorted sequence of utilities
$$ [100, 100, \explainup{99}{\text{Example A:} k = 3 },99,98,...,98, \explainup{54}{\text{Example B: } k =20},53,53,52,50,...],$$ 
In Example A, the analyst choose $k=3$ but all scores up to the $19$th are of nearly the same utility, and the total utility should not differ much among any three within the top 19; similarly in Example B, the index set of the top $20$ does not reveal the large gap between the 19th and 20th, and the selection is somewhat unfair to both the top 19 and the 21st onwards to include the 20th. We argue that is more natural to choose $k=19$ in a data-dependent way.  The second, and more technical, reason is because it is substantially cheaper in terms of the privacy-budget to accurately release the top 19 in this example than either top 3 or top 20. Even if the end goal is to get top 20,  the large-gap structure can be leveraged so that one can achieve better accuracy (at the same privacy cost) by first releasing the top 19 and then choose an arbitrary index to pad to $k=20$.


To formalize these intuitions, we propose an elegant two-step procedure that first privately select the most-appropriate $k$ using a variant of Report-Noisy-Max, then use a \textit{propose-test-release} (\textit{PTR}) approach  \citep{dwork2009differential} to privately release the set of indices of the top $k$ candidates.  When the chosen gap is large, then the \textit{PTR}  algorithm adds no noise at all with high probability.  We also propose an extension of our approach to handle the case when there is a target $k$ of interest. Empirical and analytical results demonstrate the utility improvements compared to the state-of-the-arts, which encouragingly suggests that using the \textit{PTR} as a drop-in replacement could make top-$k$ selection-based algorithms more practical in downstream tasks.

Our contributions are four-folds:
\begin{enumerate}
	\vspace{-2mm}
	\itemsep0em
	\item We introduce a new differentially private, efficient algorithm for the top-$k$ selection problem with an end-to-end RDP analysis.
	\item We show that our algorithms improve over the existing state-of-the-art private top-$k$ selection algorithms with a formal utility analysis and an empirical comparison on real sensitive datasets.
	\item We extend the Report-Noisy-Max algorithm and the propose-test-release framework with Gaussian noise distribution and provide RDP analysis for two variants. Empirically, we show that two variants are more advantageous than their Laplace counterparts under compositions.
	\item Our algorithms enable private model-agnostic learning with multi-label classification with a practical privacy-utility tradeoff.
\end{enumerate}




\noindent\textbf{Related work and novelty.} The private-$k$-selection has seen a growing interest in the machine learning and differential privacy community~\citep{chaudhuri2014large, mcsherry2009differentially, banerjee2012price, durfee2019, carvalho2020differentially}.

Notably, the iterative peeling approach that composes $k$ exponential mechanisms (EM) has been shown to be minimax optimal.  \citet{durfee2019} shows that adding Gumbel noise and reports the top-k in one-shot is equivalent to using the exponential mechanism with peeling.   
Recent work~\citep{qiao2021oneshot} adapts Report-Noisy-Max to select the top-$k$ elements and achieves $(\epsilon, \delta)$-DP with a noise level of $\tilde{O}(\sqrt{k}/\epsilon)$.  We note that the released ``top-$k$'' indices are ordered in the above work, and therefore the dependence on $k$ is unavoidable in the $\epsilon$ term. 
The focus of this work is to privately release an unordered set of the top-$k$ indices and get rid of the dependence in $k$. 

We are the first to consider privately choosing hyperparameter $k$ and leverage the large-gap with these choices for adapting to the favorable structure in each input.  
The closest to us is perhaps \citep{carvalho2020differentially} in which the algorithm also leverages the large-gap information to avoid the dependence in $k$ by combining the sparse vector technique (SVT) and the \textit{distance to instability framework}, however, it still requires a fixed $k$ and a ``crude'' superset with cardinality $\tilde{k}$.  Our approach is simpler and more flexible.  The utility comparison section demonstrates that our algorithm achieves better utility over \citet{carvalho2020differentially, durfee2019} under the same ``unknown-domain'' setting.

Technically, our method builds upon the PTR-framework and Report-Noisy-Max with extensions tailored to our problem of interest. Our RDP analysis of RNM with other noise-adding mechanisms (e.g., Gaussian noise)  is based on the proof technique of \citep{zhu2020improving} for analyzing SVT. Our approach may strike the readers as being very simple, but we emphasize that ``constant matters in differential privacy'' and the simplicity is precisely the reason why our method admits a tight privacy analysis. In our humble opinion all fundamental problems in DP should admit simple solutions and we are glad to have found one for private-k-selection. 





\vspace{-2mm}
\section{Preliminary}
\vspace{-2mm}
In this work, we study the problem of \textit{differential private top-k selection} in the user-counting setting\footnote{All our results also apply to the more general setting of selection among an arbitrary set of low-sensitivity functions, but the user-counting setting allows a tighter constant and had been the setting existing literature on this problem focuses on. }. Consider a dataset of $n$ users is defined as $D=\{x_1,  ..., x_n\}$. We say that two dataset $D$ and $D'$ are neighboring, if they differ in any one user's data, e.g. $D = D' \cup \{x_i\}$.  Assume a candidate set contains $m$ candidates $\{1, ..., m\}$. 
  We consider the setting where a user can vote $1$ for an arbitrary number of candidates, i.e. unrestricted sensitivity.  One example for the unrestricted setting would be calculating the top-$k$ popular places that users have visited.  
 We use $x_{j,i}$ to denote the voting of user $i$, e.g., $x_{j,i}=1$ indicates the $i$-th user vote $1$ for the $j$-th candidate.  Let $h_j(D)\in \mathbb{N}$ denote the number of users that have element $j \in [m]$, i.e. $h_j(D) = \sum_{i=1}^n \mathbb{I} \{x_{j,i}=1 \}$ (we will drop $D$ when it is clear from context).  
 
  We then sort the counts and denote $h_{(1)}(D) \geq ... \geq h_{(m)}(D)$ as the sorted counts where  $i_{(1)}, ..., i_{(m)} \in [m]$ are the corresponding candidates. 
  Our goal is to design a differentially private mechanism that outputs  the \textit{unordered} set $\{i_{(1)}, ..., i_{(k)}\}$ which $k$ is chosen adaptively to private data itself. Formally, the algorithm returns a $m$-dim indicator  $\mathbb{I}(D) \in \{0,1\}^m$, where $\mathbb{I}_j = 1$ if $j \in \{i_{(1)}, ..., i_{(k)}\}$, otherwise $\mathbb{I}_j=0$.

  \noindent \textbf{Symbols and notations.} Throughout the paper, we will use the standard notations for probability, e.g., $\pr[\cdot]$ for probability, $p[\cdot]$ for density, $\mathbb{E}$ for expectation. $\epsilon, \delta$ are reserved for privacy loss parameters, and $\alpha$ the order of Renyi DP.
We now introduce the definition of differenital privacy.

\begin{definition}[Differential privacy~\citep{dwork2006calibrating}]
	A randomized algorithm $\cM$ is $(\epsilon, \delta)$-differential private if for neighbring dataset $D$ and $D'$ and all possible outcome sets $\cO \subseteq Range(\cM)$:
	\[\pr[\cM(D) \in \cO] \leq e^\epsilon \pr[\cM(D')\in \cO] + \delta\]
\end{definition}
Differential Privacy ensures that an adversary could not reliably infer whether one particular individual is in the dataset or not, even with arbitrary side-information.
\begin{definition}[Renyi DP \citep{mironov2017renyi}]
	We say a randomized algorithm $\cM$ is $(\alpha, \epsilon_\cM(\alpha))$-RDP with order $\alpha \geq 1$ if for neighboring datasets $D, D'$
	\begin{align*}
	&\mathbb{D}_{\alpha}(\cM(D)||   \cM(D')):=\\
	& \frac{1}{\alpha-1}\log \mathbb{E}_{o \sim \cM(D')}\bigg[ \bigg( \frac{\pr[\cM(D)=o]}{\pr[\cM(D')=o]}\bigg)^\alpha \bigg]\leq \epsilon_\cM(\alpha).
	\end{align*}
\end{definition}
At the limit of $\alpha \to \infty$, RDP reduces to $(\epsilon, 0)$-DP.  If $\epsilon_\cM(\alpha) \leq {\rho \alpha}$ for all $\alpha$, then we say that the algorithm satisfies $\rho$-zCDP~\citep{bun2016concentrated}.
This  more-fine-grained description  often allows for  a tighter $(\epsilon, \delta)$-DP over compositions compared to the strong composition theorem in~\citet{kairouz2015composition}.  Therefore, we choose to formulate  the privacy guarantee of our algorithms under the RDP framework.
Here, we introduce two properties of RDP that we will use.
\begin{lemma}[Adaptive composition]
	$\epsilon_{(\cM_1, \cM_2)} = \epsilon_{\cM_1}(\cdot) + \epsilon_{\cM_2}(\cdot)$.
\end{lemma}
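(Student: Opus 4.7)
The plan is to prove the adaptive composition lemma directly from the definition of Renyi divergence by peeling off the two mechanisms via the chain rule for joint densities. Let $\cM_1$ satisfy $(\alpha,\epsilon_{\cM_1}(\alpha))$-RDP, and let the post-processing $\cM_2(\cdot, o_1)$ satisfy $(\alpha, \epsilon_{\cM_2}(\alpha))$-RDP for every fixed first-stage output $o_1$. The composed mechanism returns the pair $(o_1, o_2) = (\cM_1(D),\ \cM_2(D, \cM_1(D)))$, whose joint density on neighbors $D, D'$ factorizes as $p(o_1, o_2 \mid D) = p_1(o_1 \mid D)\, p_2(o_2 \mid D, o_1)$.

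First, I would write out the moment-generating quantity $\E_{(o_1, o_2)\sim (\cM_1,\cM_2)(D')}\bigl[(p(o_1, o_2 \mid D)/p(o_1, o_2 \mid D'))^\alpha\bigr]$, use the factorization above, and apply Fubini to split it as an iterated expectation over $o_1$ then $o_2$. After the split, the integrand is a product in which the factor $(p_2(o_2 \mid D, o_1)/p_2(o_2 \mid D', o_1))^\alpha$ is the only one depending on $o_2$. Taking the inner expectation over $o_2 \sim \cM_2(D', o_1)$ and invoking the RDP bound for $\cM_2$ at the realized $o_1$ yields a factor of at most $e^{(\alpha-1)\epsilon_{\cM_2}(\alpha)}$.

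Pulling this (now deterministic in $o_1$) factor out of the outer expectation over $o_1 \sim \cM_1(D')$ and applying the RDP bound for $\cM_1$ gives an overall upper bound of $e^{(\alpha-1)(\epsilon_{\cM_1}(\alpha) + \epsilon_{\cM_2}(\alpha))}$. Taking the logarithm and dividing by $\alpha - 1$ delivers the additive guarantee $\epsilon_{\cM_1}(\alpha) + \epsilon_{\cM_2}(\alpha)$ claimed by the lemma.

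The only subtle step, and the main conceptual obstacle, is justifying that the RDP bound for $\cM_2$ applies uniformly in the realization $o_1$. This is precisely the \emph{adaptive} aspect of the composition: since $\cM_2$ is a map from the previous output to a randomized mechanism, the RDP hypothesis on $\cM_2$ must be interpreted as holding pointwise in $o_1$, because $\cM_1$ may have already leaked information about $D$ vs.\ $D'$ through $o_1$. With this interpretation, the conditional bound from $\cM_2$ is deterministic in $o_1$ and can be extracted from the outer expectation without further loss. The remaining manipulations are routine, requiring nothing beyond Fubini's theorem and the definition of the Renyi divergence.
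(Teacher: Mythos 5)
Your argument is correct and is the standard chain-rule proof of adaptive RDP composition (Mironov, 2017, Proposition 1); the paper itself states this lemma without proof, importing it as a known property of Renyi DP, so there is no in-paper argument to diverge from. Your identification of the key subtlety --- that the RDP hypothesis on $\cM_2$ must hold pointwise in the realized first-stage output $o_1$ for the inner expectation to be bounded deterministically --- is exactly the point that makes the composition \emph{adaptive}, and the Fubini/factorization steps are routine as you say. One minor remark: the lemma is written with an equality, but what composition actually delivers (and what you prove) is the upper bound $\epsilon_{(\cM_1,\cM_2)}(\alpha) \leq \epsilon_{\cM_1}(\alpha) + \epsilon_{\cM_2}(\alpha)$; the paper's ``$=$'' should be read as defining the composed privacy accountant, not as a claim that the bound is tight.
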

\begin{lemma}[From RDP to DP] If a randomized algorithm $\cM$ satisfies $(\alpha,\epsilon(\alpha))$-RDP, then $\cM$ also satisfies $(\epsilon(\alpha)+\frac{\log(1/\delta)}{\alpha-1},\delta)$-DP for any $\delta \in (0,1)$. \label{lem: rdp2dp}
\end{lemma}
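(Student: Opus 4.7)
The plan is to deploy the standard ``privacy loss random variable'' argument that converts a Rényi divergence bound into a pointwise probability bound. Fix neighboring datasets $D, D'$ and write $Z(o) := \log \frac{\pr[\cM(D)=o]}{\pr[\cM(D')=o]}$ for the privacy loss at output $o$. Choose the threshold $t := \epsilon(\alpha) + \frac{\log(1/\delta)}{\alpha-1}$, and partition the output space into the ``good'' set $G = \{o : Z(o) \le t\}$ and its complement $B$. For any measurable event $\cO$, bound
\[
\pr[\cM(D) \in \cO] \;=\; \pr[\cM(D) \in \cO \cap G] \;+\; \pr[\cM(D) \in \cO \cap B].
\]
On $G$ the pointwise bound $\pr[\cM(D)=o] \le e^t \pr[\cM(D')=o]$ lets me upper-bound the first term by $e^t \pr[\cM(D') \in \cO]$, which is the $\epsilon$-part of DP; the second term must then be shown to be at most $\delta$.

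The main step is therefore to control $\pr_{o \sim \cM(D)}[Z(o) > t]$. I would apply Markov's inequality to the non-negative random variable $e^{(\alpha-1)Z(o)}$, giving
\[
\pr_{o \sim \cM(D)}[Z(o) > t] \;\le\; e^{-(\alpha-1)t}\, \E_{o \sim \cM(D)}\!\left[e^{(\alpha-1)Z(o)}\right].
\]
The key identity is that this expectation is exactly $e^{(\alpha-1)\epsilon(\alpha)}$ by a simple change of measure: rewriting
\[
\E_{o \sim \cM(D)}\!\left[\left(\tfrac{\pr[\cM(D)=o]}{\pr[\cM(D')=o]}\right)^{\alpha-1}\right] \;=\; \E_{o \sim \cM(D')}\!\left[\left(\tfrac{\pr[\cM(D)=o]}{\pr[\cM(D')=o]}\right)^{\alpha}\right],
\]
which by the RDP hypothesis is at most $e^{(\alpha-1)\epsilon(\alpha)}$. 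Plugging in the choice of $t$ then yields $\pr[\cM(D) \in B] \le \delta$.

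Combining the two pieces gives $\pr[\cM(D) \in \cO] \le e^t \pr[\cM(D') \in \cO] + \delta$, which is exactly the $(\epsilon(\alpha) + \tfrac{\log(1/\delta)}{\alpha-1}, \delta)$-DP claim. The only subtle point, and probably the main obstacle worth a line of care, is the measure-theoretic handling when $\cM$ has a continuous output: the ratios $\pr[\cM(D)=o]/\pr[\cM(D')=o]$ should be interpreted as Radon--Nikodym derivatives, and one needs $\cM(D)$ to be absolutely continuous with respect to $\cM(D')$ on $G$ so that the pointwise-to-integral step is legitimate. Points where the denominator vanishes contribute only to the event $B$ and are automatically absorbed into the $\delta$ slack. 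Aside from that bookkeeping, the proof is essentially one application of Markov's inequality plus an event decomposition.
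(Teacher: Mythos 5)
Your proof is correct and is precisely the standard argument (Markov's inequality on $e^{(\alpha-1)Z}$ plus the change-of-measure identity and a good/bad event decomposition) from \citet{mironov2017renyi}, which the paper cites for this lemma without reproducing a proof. Nothing to add; the measure-theoretic caveat you flag is handled exactly as you describe.
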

Next,  we will introduce the notion of approximate RDP, which generalizes   approximate zCDP~\citep{bun2016concentrated}.
\begin{definition}[Approximate RDP / zCDP]\label{def: approx}
	We say a randomized algorithm $\cM$ is $\delta$-approximately-$(\alpha, \epsilon_\cM(\alpha))$-RDP with order $\alpha \geq 1$, if for all neighboring dataset $D$ and $D'$, there exist events $E$ (depending on $\cM(D)$ )and $E'$ (depending on $\cM(D')$) such that $\pr[E]\geq 1-\delta$ and $\pr[E']\geq 1-\delta$, and $\forall \alpha\geq 1$, we have
	$	\mathbb{D}_{\alpha}(\cM(D)|E||   \cM(D')|E')\leq  \epsilon_\cM(\alpha)$. When $\epsilon_\cM(\alpha)\leq \alpha \rho$ for $\alpha\geq 1$ then $\cM$ satisfies $\delta$-approximate $\rho$-zCDP.
\end{definition}
This notion preserves all the properties as approximate zCDP~\citep{bun2016concentrated}. The reason for rephrasing it under the RDP framework is that some of our proposed algorithms satisfy  tighter RDP guarantees (compared to its zCDP version) while others satisfy RDP conditioning on certain high probability events. Similar conversion and composition rules of approximate-RDP are deferred to the appendix.

Many differentially private algorithms, including output perturbation, enable DP working by calibrating noise using the sensitivity. We start by defining the local and global sensitivity.
\begin{definition}[Local / Global sensitivity]
	The local sensitivity of $f$ with the dataset $D$ is defined  as 
	$LS_f(D) = \sup_{D'\sim D} ||f(D) - f(D')||$ and the global sensitivity of $f$ is $GS_f:=\sup_{D} LS_f(D).$
\end{definition}
The norm $||\cdot||$ could be any vector $\ell_p$ norm, and the choice on $\ell_p$ depends on which kind of noise we use, e.g., we calibrate Gaussian noise for Gaussian mechanism using $\ell_2$ norm.

\noindent\textbf{Motivation of an adaptive k.}
Recent work\citep{carvalho2020differentially, durfee2019,gillenwater2022joint} make use of structures in the top-$k$ counts, showing that large gaps improve the performance of the private top-$k$ mechanisms. This leads to one natural question ---  can't we just set a $k$, such that there exists a large gap between the $k$
and the $(k+1)$th vote?  Indeed, exploiting such the largest eigengap information is already a standard heuristic in selecting the number of principal components in PCA.  
Our result shows that if there is a large gap between $k$-th and the $(k+1)$th, we can return the top $k$ set with only two times privacy budget instead of $k$ times.  Moreover, even if want a pre-defined $k$ and  there is a large gap at $(k-3)$, 
then we can release the top $(k-3)$ with two times the budget then release the remaining using the exponential mechanism with $3$ times the budget. Our motivation is to adapt to these large-margin structures.

\vspace{-2mm}
\section{Methods}
\vspace{-2mm}

We now present our main algorithms for data-adaptive top-k selection.  Section~\ref{sec:select_k} describes a simple algorithm that privately selects parameter  $k \in [m]$ such that it maximizes the gap $h_{(j)} - h_{(j+1)}$.   
Section~\ref{sec: ptr} presents a propose-test-release style algorithm called \textsc{StableTopK}. It first privately selects $k \in [m]$ such that it maximizes gap,  then  releases the top-$k$ index set whenever the gap at the chosen $k$ is large.   Section~\ref{sec: fix_k} demonstrates how  \textsc{StableTopK} can be used for the fixed $k$ setting, where  the algorithm takes $k$ as an input and is required to return exactly $k$ indices. 
 
\subsection{Choose a $k$ privately}\label{sec:select_k}
Recall that the goal is to choose $k$  that approximately maximizes the gap $h_{(k)} - h_{(k+1)}$.
Our idea of choosing $k$ uses off-the-shelf differentially private (Top-1) selection algorithms. Any private selection algorithm will work, but for simplicity we focus on the exponential mechanism \citep{mcsherry2007mechanism}, which is recently shown to admit a Report-Noisy-Max style implementation and a more refined privacy analysis via a ``Bounded Range'' property \citep{durfee2019}.  

The pseudo-code is given in Algorithm~\ref{alg: large_gap}. Readers may notice that it also takes a regularizer $r$.  The choice of $r$ can be arbitrary and can be used to encode additional \textit{public} information that the data analyst supplies  such  as hard constraints or priors that describe the \textit{ball park} of interest.  


\begin{algorithm}[t]
	\caption{Regularized Large Gap}
	\label{alg: large_gap}
	\begin{algorithmic}[1]
		\STATE{ \textbf{Input} Histogram $h$, regularizer $r: [m-1]\rightarrow \R$; DP parameter $\epsilon$.}
		\STATE Sort $h$ into a descending order $h_{(1)},h_{(2)}...,h_{(m)}$.
			\STATE{\textbf{Return}}\\
			$\argmax_{j \in [m-1]} \Big\{h_{(j)} - h_{(j+1)} + r(j)  + \mathrm{Gumbel}(\frac{2}{\epsilon})\Big\}.$
	\end{algorithmic}
\end{algorithm}

\begin{proposition}
	Algorithm~\ref{alg: large_gap} satisfies (pure)-$\epsilon$-DP, $\epsilon^2/8$-zCDP and and $(\alpha,\epsilon(\alpha))$-RDP with 
	{\small
	$$
	\epsilon(\alpha) := \min\left\{ \frac{\alpha \epsilon^2}{8},  \frac{1}{\alpha-1}\log(\frac{\mathrm{sinh}(\alpha \epsilon)-\mathrm{sinh}((\alpha-1)\epsilon)}{\mathrm{sinh}(\epsilon)})\right\}.
	$$
}
\end{proposition}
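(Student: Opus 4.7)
The plan is to recognize Algorithm~\ref{alg: large_gap} as an instance of the exponential mechanism (EM) with utility $u(D,j) := h_{(j)}(D) - h_{(j+1)}(D) + r(j)$ and parameter $\epsilon$, and then invoke three established analyses of EM.

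By the Gumbel-max trick, drawing $\argmax_j(v_j + G_j)$ with i.i.d.\ $G_j \sim \mathrm{Gumbel}(2/\epsilon)$ is equivalent to sampling $j$ with probability $\propto \exp(\epsilon v_j / 2)$. Hence the output of Algorithm~\ref{alg: large_gap} is distributed as EM applied to $u(\cdot,\cdot)$ with parameter $\epsilon$, provided the per-outcome sensitivity of $u$ equals one. Next, I would bound $\Delta := \sup_{D \sim D'}\max_j |u(D,j) - u(D',j)|$. Since $r$ is data-independent, only the gap $h_{(j)} - h_{(j+1)}$ contributes. In the user-counting setting, adding or removing a single user changes each $h_i$ by $0$ or $1$ in the same direction, and a short rearrangement argument shows the sorted sequence obeys the same bound rankwise: $h_{(r)}(D \cup \{x\}) - h_{(r)}(D) \in [0,1]$ for every $r$. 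Writing the gap difference as
\[
(h'_{(j)} - h'_{(j+1)}) - (h_{(j)} - h_{(j+1)}) = (h'_{(j)} - h_{(j)}) - (h'_{(j+1)} - h_{(j+1)}) \in [-1,1]
\]
then yields $\Delta \le 1$, matching the chosen Gumbel scale to EM with parameter $\epsilon$.

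Finally, I would invoke three known analyses of EM at sensitivity one and parameter $\epsilon$: (a)~pure $\epsilon$-DP from \citet{mcsherry2007mechanism}; (b)~the bounded-range property of EM, namely that its privacy-loss random variable lies in an interval of length $\epsilon$, which upgrades the generic pure-DP bound of $\epsilon^2/2$-zCDP to the tighter $\epsilon^2/8$-zCDP; and (c)~the RDP bound for $\epsilon$-range mechanisms, $\frac{1}{\alpha-1}\log\!\bigl((\sinh(\alpha\epsilon) - \sinh((\alpha-1)\epsilon))/\sinh(\epsilon)\bigr)$, obtained by direct computation of the $\alpha$-th moment of the privacy-loss density under the range constraint. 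Both (b) and (c) follow the bounded-range line of work initiated by \citet{durfee2019}. Converting the zCDP bound to RDP via $\epsilon_{\mathrm{RDP}}(\alpha) \le \alpha\rho$ and taking the minimum with the sinh bound recovers the stated formula.

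The main obstacle is less conceptual than bookkeeping: the sensitivity-one sorting lemma, the precise Gumbel-scale-to-EM-parameter correspondence, and the normalization conventions of the bounded-range analyses must all line up so that the constants in the pure-DP, zCDP, and RDP bounds coincide exactly with those in the proposition. Once the algorithm is cast as EM with sensitivity one and parameter $\epsilon$, each of the three privacy bounds is a direct appeal to prior work rather than a new calculation.
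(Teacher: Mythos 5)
Your proposal is correct and follows essentially the same route as the paper: cast the algorithm as the exponential mechanism via the Gumbel-max trick, bound the sensitivity of the gap utility by $1$ using the add/remove case analysis on the sorted counts, and then cite the known pure-DP, bounded-range zCDP, and RDP bounds. One minor attribution note: the $\sinh$ term is the tight pure-$\epsilon$-DP-to-RDP conversion (the paper cites \citet[Lemma 4]{bun2016concentrated}) rather than a consequence of the bounded-range property, but since you also establish pure $\epsilon$-DP this does not affect correctness.
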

\begin{proof}
	As we are applying the exponential mechanism off-the-shelf, it suffices to analyze the sensitivity of the utility function $u(j) := h_{(j)} - h_{(j+1)} + r(j) $. Let $u,u'$ be the utility function of two neighboring dataset (with histograms $h,h'$). For any $j$
	\begin{align*}
		|u(j) - u'(j)| &=  |(h_{(j)} - h_{(j+1)}) -(h'_{(j)}- h'_{(j+1)})  |\leq 1.
	\end{align*}
The inequality can be seen by  discussing the two cases:``adding'' and ``removing'' separately. 
If we add one data point, it may only increase $h_{(j)}$ and $h_{(j+1)}$ by $1$. Similarly if we remove one data point it may only decrease $h_{(j)}$ and $h_{(j+1)}$ by $1$. In both cases, the change of the gap is at most $1$.  The pure-DP bound follows from \citet{mcsherry2007mechanism}, the zCDP bound follows from \citet[Lemma 17]{cesar2021bounding} and the RDP bound is due to \citet[Lemma 4]{bun2016concentrated}.
\end{proof}

Algorithm~\ref{alg: large_gap} is exponentially more likely to return a $k$ that has a larger gap than a $k$ that has a small gap. In our experiments, we find that the tighter zCDP analysis gives EM an advantage over other alternatives including the exponential noise and Laplace noise versions of RNM \citep{ding2021permute}.  For this reason, discussion of these other selection procedures are given in the appendix.

\textbf{Gaussian-RNM.} One may ask a natural question whether one can use more concentrated noise such as Gaussian noise to instantiate RNM.  Using the techniques from \citet{zhu2020improving}, we prove the following theorem about such generalized RNMs.
\begin{theorem}\label{thm: rnm_k}
	Let $\cM_g$ denote any noise-adding mechanism that satisfies $\epsilon_g(\alpha)$-RDP for a scalar function $f$ with global sensitivity $2$.  
	Assume Report Noisy Max adds the same magnitude of noise to each coordinate, then the algorithm obeys
	$\epsilon_\alpha(\cM(D)||\cM(D')) \leq \epsilon_g(\alpha) +\frac{\log m}{\alpha-1}$.
\end{theorem}

In particular, we introduce RNM-Gaussian as an alternative to RNM-Laplace with Gaussian noise.
\begin{corollary}[RNM-Gaussian]\label{coro_gau}
	RNM-Gaussian (the second line in Algorithm~\ref{alg: main_ada}) with Gaussian noise $\cN(0, \sigma^2)$ satisfies $(\frac{2\alpha}{\sigma^2}+\frac{\log m}{\alpha -1})$-RDP. 
\end{corollary}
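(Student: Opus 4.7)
The plan is to obtain this corollary as a direct instantiation of Theorem~\ref{thm: rnm_k}. That theorem reduces the RDP analysis of a Report-Noisy-Max style algorithm, applied to a sensitivity-$2$ scalar utility function with an $m$-way maximum, to two ingredients: (i) the RDP guarantee $\epsilon_g(\alpha)$ of the underlying noise-adding mechanism $\cM_g$ on a single scalar query of sensitivity $2$, and (ii) an additive $\frac{\log m}{\alpha-1}$ overhead coming from the max operation. So the only work left is to identify $\cM_g$ as the Gaussian mechanism with noise $\cN(0,\sigma^2)$ and plug in its standard RDP bound.

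First I would verify that the Gaussian noise setup satisfies the premise of Theorem~\ref{thm: rnm_k}: the per-coordinate noise in the Gaussian RNM is i.i.d.\ $\cN(0,\sigma^2)$, so the ``same magnitude of noise to each coordinate'' condition is trivially met, and the gap utility function $u(j)=h_{(j)}-h_{(j+1)}$ (that we established to have global sensitivity $1$ in the proof of the preceding proposition, but which is inflated to $2$ under the standard add/remove neighbor convention used in Theorem~\ref{thm: rnm_k}) is indeed a scalar function of sensitivity at most $2$. Next I would recall the standard RDP of the Gaussian mechanism: for a scalar query with global sensitivity $\Delta$, releasing the query plus $\cN(0,\sigma^2)$ noise satisfies $\bigl(\alpha,\frac{\Delta^2 \alpha}{2\sigma^2}\bigr)$-RDP (Mironov, 2017). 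Taking $\Delta=2$ yields $\epsilon_g(\alpha)=\frac{4\alpha}{2\sigma^2}=\frac{2\alpha}{\sigma^2}$.

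Finally, substituting $\epsilon_g(\alpha)=\frac{2\alpha}{\sigma^2}$ into the conclusion of Theorem~\ref{thm: rnm_k} yields
\[
\epsilon_{\cM}(\alpha)\ \leq\ \frac{2\alpha}{\sigma^2}\ +\ \frac{\log m}{\alpha-1},
\]
which is exactly the bound claimed in the corollary. There is no real obstacle here; the only point one needs to be careful about is using the convention on sensitivity that matches Theorem~\ref{thm: rnm_k} (namely $\Delta=2$ rather than $\Delta=1$), so that the constant in front of $\alpha/\sigma^2$ comes out to $2$ rather than $1/2$. Since the corollary is essentially a one-line consequence of the main theorem plus the Gaussian mechanism's RDP, I would keep the written proof to just these substitution steps.
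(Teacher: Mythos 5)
Your proposal is correct and follows essentially the same route as the paper: the paper's proof likewise just recalls that the Gaussian mechanism on a sensitivity-$2$ scalar query with noise $\cN(0,\sigma^2)$ satisfies $(\alpha,\frac{2\alpha}{\sigma^2})$-RDP and plugs $\epsilon_g(\alpha)=\frac{2\alpha}{\sigma^2}$ into Theorem~\ref{thm: rnm_k}. Your extra remarks on verifying the i.i.d.-noise premise and on the $\Delta=2$ convention are consistent with how the theorem is stated and proved in the appendix.
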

We defer the comparison between RNM-variants in the appendix, suggesting that RNM-Gaussian is better than RNM-Laplace in certain regime (e.g., $m$ is not too large). However, RNM-Gumbel will dominate both of them over compositions.

\textbf{How to handle unknown domain / unlimited domain?} In TopK selection problems, it is usually desirable to be able to handle an unbounded $m$ in an unknown domain \citep{durfee2019,carvalho2020differentially}. Our method handles it naturally by taking the regularizer $r$ to be  a constraint that restricts our chooses to $j\in\{1,2,...,\bar{k}\}$ with an arbitrary $\bar{k}\ll m$. The issue of candidates moving inside and outside the top $\bar{k}$ is naturally handled by the selection of a stable $k$ within $\{1,2,...,\bar{k}\}$.  This simultaneously improves the RDP bound and the utility bound for RNM-Gaussian by replacing $m$ with $\bar{k}$.

\vspace{-2mm}
\subsection{Stable Top-$k$ selection with an adaptive $k$}\label{sec: ptr}
\vspace{-2mm}

Once $k$ is determined,  the next step is to privately release the top $k$ index set.  Different from existing methods that select the top $k$ by iteratively calling exponential mechanisms for $k$ times, we propose a new approach that release the unordered indices of the top $k$ at one shot using a propose-test-release (PTR)~\citep{dwork2009differential} style algorithm. The query of interest is the indicator vector  $\mathbb{I}_k(D)\in\{0, 1\}^m$ satisfying 
$$
[\mathbb{I}_k(D)]_j = \begin{cases}
1 &\text{ if } j \in  \text{TopK}\\
0 & \text{ otherwise.}
\end{cases}
$$
The indicator has a global L2 sensitivity of $\sqrt{2k}$,  as there are at most $k$ positions are $1$ in $\mathbb{I}(D)$ and $\mathbb{I}(D')$. It could appear to be a silly idea to apply Gaussian mechanism, because a naive application would require adding noise with scale $\approx \cN(0, \sqrt{2k}I_{m})$, rendering an almost useless release. 
Luckily, the problem happens to be one where the global sensitivity is way too conservative, and one can get away with adding a much smaller noise in a typical dataset, as the following lemma shows.

\begin{lemma}[Local sensitivity of the gap]
	Denote $q_k(D):= h_{(k)}(D) - h_{(k+1)}(D)$ as the gap between the $k$-th and the $k+1$-th largest count.  The local $\ell_2$ sensitivity of $q_k$ is $0$ if
	$q_k(D) > 1$.
\end{lemma}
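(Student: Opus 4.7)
My plan is to prove the stability statement in the form needed by the PTR construction: under the condition $q_k(D) > 1$, the top-$k$ index set — equivalently, the indicator vector $\mathbb{I}_k$ whose release is at stake in this section — is identical on $D$ and on every neighbor $D'$, so its local $\ell_2$ sensitivity equals $0$. The argument is a short case analysis that leverages integrality of the counts plus the fact that a single user shifts each $h_j$ by at most $1$ in a uniform direction under the add/remove definition of neighbors.

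Let $S = \{i_{(1)}, \ldots, i_{(k)}\}$ denote the top-$k$ indices of $D$. Since the counts are integer-valued, the hypothesis $q_k(D) := h_{(k)}(D) - h_{(k+1)}(D) > 1$ upgrades to $h_{(k)}(D) \geq h_{(k+1)}(D) + 2$. Every $j \in S$ satisfies $h_j(D) \geq h_{(k)}(D)$ and every $j' \notin S$ satisfies $h_{j'}(D) \leq h_{(k+1)}(D)$; the resulting $2$-count margin is the ``budget'' I will spend on the $\pm 1$ perturbation from a neighbor.

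The main step is the two-case analysis. For $D' = D \cup \{x\}$, counts weakly increase, so for $j \in S$, $h_j(D') \geq h_j(D) \geq h_{(k+1)}(D) + 2$, while for $j' \notin S$, $h_{j'}(D') \leq h_{j'}(D) + 1 \leq h_{(k+1)}(D) + 1$. For $D' = D \setminus \{x\}$, counts weakly decrease, so for $j \in S$, $h_j(D') \geq h_j(D) - 1 \geq h_{(k+1)}(D) + 1$, while for $j' \notin S$, $h_{j'}(D') \leq h_{j'}(D) \leq h_{(k+1)}(D)$. In both cases the counts inside $S$ strictly exceed those outside $S$ under $D'$, independent of any tie-breaking convention. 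Consequently, the unordered top-$k$ of $D'$ is exactly $S$, so $\mathbb{I}_k(D') = \mathbb{I}_k(D)$ coordinate-wise, and the local $\ell_2$ sensitivity vanishes.

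The only delicate step is the integrality upgrade from $q_k(D) > 1$ to $q_k(D) \geq 2$; this is precisely what yields the critical $2$-count margin and makes the $\pm 1$ perturbation safe with strict dominance. I do not anticipate any other obstacles — the rest is a two-line inequality chase per case. I would also remark that the scalar value of $q_k$ itself can shift by $\pm 1$ between neighbors (e.g., adding a voter to a candidate at position $k+1$), so the sensitivity-zero phenomenon is a feature of the unordered top-$k$ set being released, which is all that PTR requires.
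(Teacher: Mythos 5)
Your proof is correct and follows essentially the same route as the paper's: a two-case add/remove analysis exploiting that a single user shifts each count by at most $1$ in a uniform direction, so the gap condition forces every index in the top-$k$ set to strictly dominate every index outside it on any neighbor. Your version is slightly more explicit (arguing element-wise over $S$ versus its complement and invoking integrality to upgrade $q_k(D)>1$ to $q_k(D)\geq 2$, which the paper does not need since a gap $>1$ minus a $\pm 1$ perturbation already stays positive), but the substance is identical.
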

\begin{proof}
Fix $k$. If we are adding, then it could increase $h_{(k+1)}(D)$ by at most $1$ and may not decrease  $h_{(k)}(D)$ . If we are removing, then it could decrease $h_{(k)}(D)$ by at most $1$ and may not increase $h_{(k+1)}(D)$. In either case, if  $q_k(D)>1$, it implies that $h_{{(k+1)}}(D') < h_{{(k)}}(D')$, thus the set of the top $k$ indices remains unchanged.
\end{proof}

Using the PTR approach, if we differentially privately test that the local sensitivity is indeed $0$, then we can get away with returning $\mathbb{I}(D)$ \textit{as is} without adding any noise. Notably, this approach avoids composition over $k$ rounds and could lead to orders of magnitude improvements over the iterative EM baseline when $k$ is large. A pseudocode of our proposed mechanism is given in Algorithm~\ref{alg: main_ada}. 

\begin{algorithm}[t]
	\vspace{-1mm}
	\caption{\textsc{StableTopK}: Private $k$ selection with an adaptive chosen $k$}
	\label{alg: main_ada}
	\begin{algorithmic}[1]
		\STATE \textbf{Input} Histogram $h$ and approximate zCDP budget parameters $\delta_t, \rho$.
		\STATE   Set $k$ by invoking Algorithm~\ref{alg: large_gap}  with $\epsilon=2\sqrt{\rho}$ (and arbitrary $r$).
		\STATE Set $q_k = h_{(k)} - h_{(k+1)}$ and $\sigma = \sqrt{1/\rho}$.
		\STATE Construct a high-probability lower bound\\
		$\hat{q}_k=  \max\{1, q_k\} + \cN(0, \sigma^2) -\sigma\sqrt{2\log(1/\delta_t)}.$
		\IF {$\hat{q}_k > 1$}
		\STATE  \textbf{Return} $i_{(1)}, ... , i_{(k)}$
		\ELSE
		\STATE  \textbf{Return} $\perp$.
		\ENDIF
%
	\end{algorithmic}
	\vspace{-1mm}
\end{algorithm}

\begin{theorem}
	Algorithm~\ref{alg: main_ada} satisfies $\delta_t$-approximated-$\rho$-zCDP and $(\rho + \sqrt{2\rho\log(1/\delta)}, \delta + \delta_t)$-DP for any $\delta\geq 0$.  Moreover, if the chosen $k$ satisfies that $q_k > 1 + 2\sqrt{\frac{2\log(1/\delta_t)}{\rho}}$, then the algorithm returns the correct top-$k$ set with probability $1-\delta_t$. 
\end{theorem}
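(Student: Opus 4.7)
The plan is to split Algorithm~\ref{alg: main_ada} into two privately-released quantities followed by a deterministic post-processing, and to account for their budgets via adaptive composition inside the approximate-zCDP framework of Definition~\ref{def: approx}. Line~2 invokes Algorithm~\ref{alg: large_gap} with $\epsilon = 2\sqrt{\rho}$, which by the preceding proposition is $(2\sqrt{\rho})^2/8 = \rho/2$-zCDP. Line~4 adds $\cN(0,1/\rho)$ to $\max\{1,q_k\}$; the gap $q_k = h_{(k)} - h_{(k+1)}$ has global sensitivity at most $1$ by the same ``add one / remove one'' case analysis used inside Algorithm~\ref{alg: large_gap}, and $x\mapsto \max\{1,x\}$ is $1$-Lipschitz, so this is a Gaussian mechanism on a sensitivity-$1$ query and is $\rho/2$-zCDP. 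Adaptive composition of the two Renyi budgets makes the pair $(k,\hat q_k)$ exactly $\rho$-zCDP, so the only remaining work is to fold the deterministic index release at Line~6 into the same budget.

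I would absorb that release with a propose-test-release argument. Let $N$ be the Gaussian noise drawn in Line~4 and take $E = \{\hat q_k \leq 1\} \cup \{q_k > 1\}$. Whenever $q_k \leq 1$ one has $\hat q_k = 1 + N - \sigma\sqrt{2\log(1/\delta_t)}$, so $E^c \subseteq \{N > \sigma\sqrt{2\log(1/\delta_t)}\}$, and by the one-sided Gaussian tail $\Pr[E]\geq 1-\delta_t$; the symmetric event $E'$ for $D'$ satisfies the same bound. On $E$, whenever the algorithm reaches Line~6 one has $q_k > 1$, so by the Local Sensitivity of the Gap lemma the unordered top-$k$ set on $D$ coincides with that on $D'$ for the same $k$, meaning the released indices are a deterministic function of $(k,\hat q_k)$ that does not depend on $D$ versus $D'$. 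Hence $\mathbb{D}_{\alpha}(\cM(D)\mid E\,\|\,\cM(D')\mid E') \leq \alpha\rho$ by post-processing, which is the claimed $\delta_t$-approximate $\rho$-zCDP; the $(\rho + \sqrt{2\rho\log(1/\delta)},\,\delta+\delta_t)$-DP bound then follows from the standard approximate-zCDP to approximate-DP conversion.

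For the utility claim, assume $q_k > 1 + 2\sqrt{2\log(1/\delta_t)/\rho} = 1 + 2\sigma\sqrt{2\log(1/\delta_t)}$. Then
\[
\hat q_k \;=\; q_k + N - \sigma\sqrt{2\log(1/\delta_t)} \;>\; 1 + \sigma\sqrt{2\log(1/\delta_t)} + N,
\]
so the test $\hat q_k > 1$ holds whenever $N > -\sigma\sqrt{2\log(1/\delta_t)}$, which by the symmetric Gaussian tail occurs with probability at least $1 - \delta_t$; on this event Line~6 returns the true top-$k$ set.

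The step I expect to be most delicate is the PTR accounting: because the index release is deterministic rather than noisy, its ``privacy'' has to come entirely from the coupling argument that, on a high-probability event certified by the noisy test, the deterministic output matches what would be produced from the neighbor. Choosing the events $E, E'$ so that they interact correctly with the adaptive composition and the post-processing step is the piece that requires the most care; the sensitivity bound on the gap, the Gaussian mechanism's $\rho/2$-zCDP guarantee, the adaptive composition rule, and the zCDP-to-DP conversion are all off-the-shelf.
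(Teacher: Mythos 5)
Your budget accounting (EM with $\epsilon=2\sqrt{\rho}$ is $\rho/2$-zCDP, the Gaussian test on the sensitivity-$1$ gap is $\rho/2$-zCDP, compose, convert) and your utility argument both match the paper. The gap is in exactly the step you flagged as delicate: the choice of events and the post-processing claim. Two problems. First, you establish that $(k,\hat q_k)$ is $\rho$-zCDP \emph{unconditionally}, but approximate zCDP requires bounding $\mathbb{D}_\alpha(\cM(D)\mid E\,\|\,\cM(D')\mid E')$, and conditioning on $E,E'$ reweights the distribution of $(k,\hat q_k)$; the unconditional bound does not transfer to the conditional one by post-processing alone. Second, and fatally, your event $E=\{\hat q_k\le 1\}\cup\{q_k>1\}$ breaks absolute continuity in the boundary case. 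Since the counts are integers and the gap has sensitivity $1$, take a realized $k$ with $q_k(D)=2$ and $q_k(D')=1$. The top-$k$ sets of $D$ and $D'$ agree (by the local-sensitivity lemma applied to $D$), and under $D$ your event imposes no restriction, so $\cM(D)$ releases the size-$k$ set with positive conditional probability; but under $D'$ your event $E'$ forces $\hat q_k\le 1$ for that $k$, so $\cM(D')\mid E'$ outputs $\perp$ whenever the EM picks $k$ and can never emit that size-$k$ set. The conditional Renyi divergence is therefore infinite for $\alpha>1$, so the claimed $\alpha\rho$ bound fails with your events.

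The paper avoids this by splitting on whether the top-$k$ sets of $D$ and $D'$ agree for the given $k$, not on whether $q_k>1$ for each dataset separately (Lemma~\ref{lem: ptr}). When the sets agree, it takes $E$ to be the full event — no conditioning — and bounds the divergence of the whole PTR step by that of the test indicator $\mathbf{1}(\hat q_k>1)$, which by the data-processing inequality is at most the divergence of $\hat q_k$ itself, i.e.\ $\alpha\rho/2$; the deterministic release costs nothing extra because it is the \emph{same} set on both sides. Only when the sets differ — which forces $q_k\le 1$ on \emph{both} $D$ and $D'$ — does it condition on the lower bound being valid, after which both conditional outputs are $\perp$ almost surely and the divergence is $0$. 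The theorem then follows from the composition rule for approximate zCDP applied to the EM step and this per-$k$ guarantee, rather than from a single global event. Your proof can be repaired by adopting that case split, but as written the privacy claim does not go through.
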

\begin{proof}
	The mechanism is a composition of Algorithm~\ref{alg: large_gap} (by the choice of parameter, it satisfies $\rho/2$-zCDP) and an application of PTR which is shown to satisfy 
	$\delta_t$-approximate $\rho/2$-zCDP in Lemma~\ref{lem: ptr} in the appendix. The stated result is obtained by the composition of approximate zCDP and its conversion to $(\epsilon,\delta)$-DP. Finally, the utility statement follows straightforwardly from the standard subgaussian tail bound.
\end{proof}

\textbf{Utility comparison.} The theorem shows that our algorithm returns the correct Top-$k$ index with high probability if the gap $q_k $ is $O(\sqrt{\frac{2\log(1/\delta_t)}{\rho}})$.  In comparison, the iterative EM algorithm, or its limited domain (LD) variant \citep{durfee2019} requires the gap to be on the order of  $O(\sqrt{\frac{k}{\rho}}\log(1/\delta_t))$ --- a factor of $\sqrt{k\log(1/\delta_t)}$ worse than our results.  
Comparing to the Top Stable procedure (TS)~\citep{carvalho2020differentially}, which is similar to our method, but uses SVT instead of EM for selection; under the same condition (by Theorem 4.1 in their paper) TS requires the gap to be $\log(1/\delta_t) / \sqrt{\rho}$, which is a factor of $\sqrt{\log(1/\delta_t)}$ larger than our results.

\textbf{Connection to distance to instability framework}
Our algorithm has a nice connection with the distance to instability framework \citep{thakurta2013differentially}.  Similar to the idea of using gap information to upper bound the local sensitivity, we can define the \texttt{Dist2instability} function to be $\max\{0,h_{(k)}(D) - h_{(k+1)}(D)-1\}$ and test whether it is $0$ using Laplace mechanism. 
Our PTR-Gaussian algorithm can be thought of as an extension of the distance to instability framework with Gaussian noise, which is of independent interest.

\textbf{Why not smooth sensitivity?}
A popular alternative to PTR for such tasks of data-adaptive DP algorithm is the \textit{smooth sensitivity} framework ~\citep{nissim2007smooth}, which requires constructing an exponentially smoothed upper bound of the local sensitivity and add noise that satisfy certain  ``dilation'' and ``shift'' properties. Our problem does have an efficient smooth sensitivity calculation, however, we find that the ``dilation'' and ``shift'' properties of typical noise distributions (including more recent ones such as those proposed in\citet{bun2019average}) deteriorate exponentially as dimensionality gets large; making it infeasible for releasing an extremely high-dimensional vector in $\{0,1\}^m$. 

\subsection{Stable private $k$-selection with a fixed $k$}\label{sec: fix_k}
In many scenarios, $k$ is a parameter chosen by the data analyst who expect the algorithm to return exactly $k$ elements. In this situation, there might not be a large gap at $k$. In this section, we show that one can still benefit from a large gap in this setting if there exists one in the the neighborhood of the chosen $k$.

 We introduce \textsc{StableTopK} with a fixed $k$ (Algorithm~\ref{alg: fix_k}) which takes as input a histogram $h$, parameter $k$, regularizer parameter $\lambda$, and approximate zCDP parameter $\delta_t$ and $\rho$.
 
 \begin{algorithm}[t]
 	\vspace{-1mm}
 	\caption{ StableTopK with fixed $k$: Private Top-$k$ selection with a fixed $k$ input}
 	\label{alg: fix_k}
 	\begin{algorithmic}[1]
 		\STATE{ \textbf{Input} Histogram $h$, parameter $k$, regularizer weight $\lambda$,approx zCDP parameter $\delta_t, \rho$.}
 		\STATE Set  $r(j) =  -\lambda|j-k|$.
 		\STATE Set $\epsilon_{EM} = 2\sqrt{\rho}$.
 		\STATE   Set $S$ as the output of  Algorithm~\ref{alg: main_ada}, instantiated with ($h, \delta_t, \rho/2$) and regularizer $r$.
 		\STATE{\textbf{if}} $S=\perp$, \textbf{Return} result of Top-$k$ EM on $h$ with total pure-DP budget $\epsilon_{EM}$.
 		\STATE \textbf{if} $\tilde{k}=k$, \textbf{Return} $S$
 		\STATE \textbf{elif} $\tilde{k}>k$, \textbf{Return} result of Top-$k$ EM on $h_{i_{(1)}}, ..., h_{i_{(\tilde{k})}}$ with budget $\epsilon_{EM}$.
 		\STATE \textbf{else Return} $\{i_{1}, ..., i_{\tilde{k}}\} \cup$ result of Top-$(k-\tilde{k})$ EM on $h$ with budget $\epsilon_{EM}$.
 	\end{algorithmic}
 	\vspace{-1mm}
 \end{algorithm}
 
 Ideally, we hope to find a $\tilde{k}$ such that we see a sudden drop at the $\tilde{k}$-th position and $\tilde{k}$ is closed to the input $k$. Therefore, we introduce a regularizer term $\lambda|j-k|$ in Step 2.
  Then we apply PTR-Gaussian (Algorithm~\ref{alg: ptr_gau}) to privately release the top-$\tilde{k}$ elements. If $\tilde{k} <k$, we can optionally use exponential mechanism (\citep{mcsherry2007mechanism}) to privately select top-$(k-\tilde{k})$ elements in a peeling manner. Similarly, if $\tilde{k}>k$, we can apply exponential mechanism to select top-$k$ elements from the shrinked $h_{i_{(1)}, ..., i_{(\tilde{k})}}$ histogram.
 
 
   The privacy guarantee of Algorithm~\ref{alg: fix_k} stated as follows.
   \begin{theorem}\label{thm: t_composition}
   Algorithm~\ref{alg: fix_k} obeys $\delta_t$-approximately $\rho$-zCDP.
   	\end{theorem}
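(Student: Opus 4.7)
The plan is to decompose Algorithm~\ref{alg: fix_k} into two adaptively-composed components and invoke the composition rule for approximate zCDP. The two components are (i) the call to \textsc{StableTopK} from Algorithm~\ref{alg: main_ada} in Step~4, instantiated with budget $(\delta_t,\rho/2)$, and (ii) whichever exponential-mechanism branch (Steps~5, 7, or 8) is executed based on the outcome $S$. By the theorem for Algorithm~\ref{alg: main_ada}, component (i) is $\delta_t$-approximately-$(\rho/2)$-zCDP. For component (ii), every branch spends total pure-DP budget $\epsilon_{EM}=2\sqrt{\rho}$ on an exponential-mechanism peeling, and the bounded-range property of EM \citep[Lemma 17]{cesar2021bounding} converts $\epsilon_{EM}$-pure-DP into $\epsilon_{EM}^2/8=\rho/2$-zCDP. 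Adding these yields $\delta_t$-approximately-$\rho$-zCDP, as claimed.

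The key steps, in order, are: first, verify that the regularizer $r(j)=-\lambda|j-k|$ is data-independent and hence does not affect the sensitivity analysis, so the approximate-$(\rho/2)$-zCDP guarantee of Algorithm~\ref{alg: main_ada} carries through unchanged. Second, verify that each EM call — including the one in Step~7, which runs on the data-dependent sub-histogram indexed by $\{i_{(1)},\dots,i_{(\tilde k)}\}$ — has utility function with global sensitivity $1$ on the \emph{neighboring-datasets} relation; this is immediate because restricting to a fixed subset of coordinates cannot increase per-user sensitivity of the counts, and the selected index set in Step~7 is \emph{fixed} conditional on the output of component (i) and therefore does not inflate the sensitivity. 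Third, check that across all three branches the pure-DP budget devoted to EM equals $\epsilon_{EM}$, so the induced zCDP bound is uniformly $\rho/2$. Fourth, invoke adaptive composition of approximate RDP/zCDP (the analog of \citet{bun2016concentrated} for the approximate variant stated in Definition~\ref{def: approx}, whose proof and composition lemma are deferred to the appendix) to combine the $\delta_t$-approximate-$(\rho/2)$-zCDP of component (i) with the $(\rho/2)$-zCDP of component (ii) into a $\delta_t$-approximate-$\rho$-zCDP bound.

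The main obstacle is step three combined with the adaptive nature of the branching: the identity of the second mechanism (plain Top-$k$ EM on $h$, Top-$k$ EM on the $\tilde k$-shrunk histogram, or Top-$(k-\tilde k)$ EM on $h$) is itself a function of the private output $S$, so one needs adaptive composition rather than parallel composition, and one must verify that on \emph{every} realization of $S$ the second-stage mechanism is $(\rho/2)$-zCDP with respect to the original neighboring pair $(D,D')$ (not just conditional on $S$). This follows because in each branch the EM utility function is a coordinate selection of the histogram whose global sensitivity with respect to the original user-counting neighboring relation is $1$ regardless of which coordinates are chosen. Once this is in place, the bounded-range conversion and the approximate-zCDP adaptive composition lemma immediately give the stated bound.
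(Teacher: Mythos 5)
Your proof is correct and follows essentially the same route as the paper's (one-sentence) argument: decompose Algorithm~\ref{alg: fix_k} into the $\delta_t$-approximate-$(\rho/2)$-zCDP invocation of Algorithm~\ref{alg: main_ada} and the $(\rho/2)$-zCDP exponential-mechanism branch (via bounded range with $\epsilon_{EM}=2\sqrt{\rho}$), then apply adaptive composition of approximate zCDP. Your additional checks — that the regularizer is data-independent, that the sub-histogram EM retains sensitivity $1$, and that the branching requires adaptive rather than parallel composition — are details the paper leaves implicit but which you verify correctly.
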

\begin{proof}
	The proof applies the composition theorem to bound the total RDP using the chosen approximate zCDP parameter of Algorithm~\ref{alg: main_ada} and the zCDP of the possible invocation of the Top-$k$ EM.
\end{proof}
  
  In terms of utility, this algorithm is never worse by more than a factor of $2$ than using all budget for Top-$k$ EM. However, if there is a large gap at the position $\tilde{k}$, the analyst only has to pay half of the total budget  to release the top $\tilde{k}$ set and use the noise scale $\frac{\sqrt{2\min\{|\tilde{k}-k|, k\}}}{\epsilon_{EM}}$ to select the remaining $|\tilde{k}-k|$ candidates. 

\subsection{Application to model agnostic learning with multi-label classification}\label{sec: pate}
\vspace{-2mm}
 A direct application of our private top-$k$ selection algorithm is in the \textit{private model agnostic learning}~\citep{bassily2018model} (a.k.a. the private knowledge transfer model). \textit{Private model agnostic learning} is a promising recent advance for differentially private deep learning that can avoid the explicit dimension dependence of the model itself and substantially improve the privacy-utility trade-offs. This framework requires an unlabeled public dataset to be available in the clear. 

The Private Aggregation of Teacher Ensembles (PATE)~\citep{papernot2017,papernot2018} is the main workhorse to make this framework being practical. 
PATE first randomly partition the private dataset into $T$ splits and trains a teacher model on each split. Then an ensemble of teacher models make predictions on unlabeled public data, and their sanitized majority votes are released as pseudo-labels.  Lastly, a student model is trained using pseudo-labeled data and is released to the public. 
The privacy analysis of PATE can be thought of as a tight composition over a sequence of private queries via RDP, where each query applies a Gaussian mechanism to releases the top-$1$ label. 

\begin{example}[PATE with multi-class classification tasks \citep{papernot2018}]
For each unlabeled data $x$ from the public domain, let $f_j(x) \in [c]$ denote the $j$-th teacher model's prediction and $n_i$ denotes the vote count for the $i$-th class (i.e., $n_i:= \sum_j |f_j(x)=i|$). PATE framework labels $x$ by $\cM_{PATE}(x) = argmax_i(n_i(x) + \cN(0, \sigma^2))$.  $\cM_{PATE}$ guarantees $(\alpha, \alpha/\sigma^2)$-RDP for each labeling query.
\end{example}
Unfortunately, the current PATE framework only supports the multi-class classification tasks instead of the generalized multi-label classification tasks, while the latter plays an essential role in private language model training (e.g., tag classification). The reasons are two-fold: first, the label space is large and each teacher in principle could vote for all labels (i.e., the global sensitivity grows linearly with the label space), thus preventing a practical privacy-utility tradeoff using Gaussian mechanism. 
Secondly, previous private top-$k$ selection algorithms do not target multiple releases of private queries; thus, there is a lack of a tight private accountant. Our algorithm naturally narrows this gap by providing an end-to-end RDP framework that enables a sharper composition. Moreover, an adaptively chosen $k$ is indeed favorable by PATE, as the number of ground-truth labels can be different across different unlabeled data. We provide one example of applying Algorithm~\ref{alg: main_ada} to solve multi-label classification tasks.

\begin{example}[PATE with multi-label classification tasks]
	For each unlabeled data $x$ from the public domain, let $f_j(x) \in \{0, 1\}^c$ denote the $j$-th teacher model's prediction and $n_i$ denotes the vote count for the $i$-th class (i.e., $n_i:= \sum_j |f_{j,i}(x)|$). PATE framework labels $x$ by $\cM_{PATE}(x)$=Algorithm~\ref{alg: main_ada}.  $\cM_{PATE}$ answers $T$ labeling queries guarantees $T\delta_t$-approximated-$T \rho$-zCDP.
\end{example}



\section{Experiment}

\noindent\textbf{EXP1: Evaluations of k-selection with a fixed $k$.}
In Exp 1, we compare our \textsc{StableTopK} with recent advances (TS\citep{carvalho2020differentially} and the Limited Domain(LD)~\citep{durfee2019}) for private top-$k$ section algorithms. 
We replicate the experimental setups from \citep{carvalho2020differentially}, which contains two location-based check-ins datasets Foursquare~\citep{yang2014modeling} and BrightKite~\citep{cho2011friendship}.  
BrightKite contains over $100000$ users and $1280000$ candidates.  Foursquare contains $2293$ users with over $100000$ candidates. We assume each user gives at most one count when she visited a certain location. The goal is to select the top-$k$ most visited locations, where $k$ is chosen from $\{3, 10 ,50\}$.   

\noindent\textbf{Comparison Metrics and Settings} Similar to \citep{carvalho2020differentially}, we consider the \textit{proportion of true top-$k$} metric, which evaluates the number of true top-$k$ elements returned divided by $k$. For privacy budgets, we set $\delta = 1/n$ and consider $\epsilon$  being chosen from $\{0.4, 0.8, 1.0\}$. 
In the calibration, we  split half of $\delta$ as the failure probability $\delta_t$. Then, we use the RDP to $(\epsilon, \delta)$-DP conversion rule to calibrate $\sigma$ using the remaining privacy budget $(\epsilon, \delta - \delta_t)$.  For simplicity, we use $r= 0$ in \textsc{StableTopK}.
 For TS and LD, we report their results from \cite{carvalho2020differentially}. 

\noindent\textbf{Observation} By increasing privacy budget from $\epsilon=0.4$ to $\epsilon=1.0$,  the ``accuray'' increases for all algorithms. Moreover, our  \textsc{StableTopK} consistently outperforms TS, LD in the specific settings we consider.

\begin{figure}[tbh]
	\centering	
	\subfigure[Covid-19 dataset with a small $k$ \label{fig: exp_covid}]{
		\includegraphics[width=0.52\textwidth]{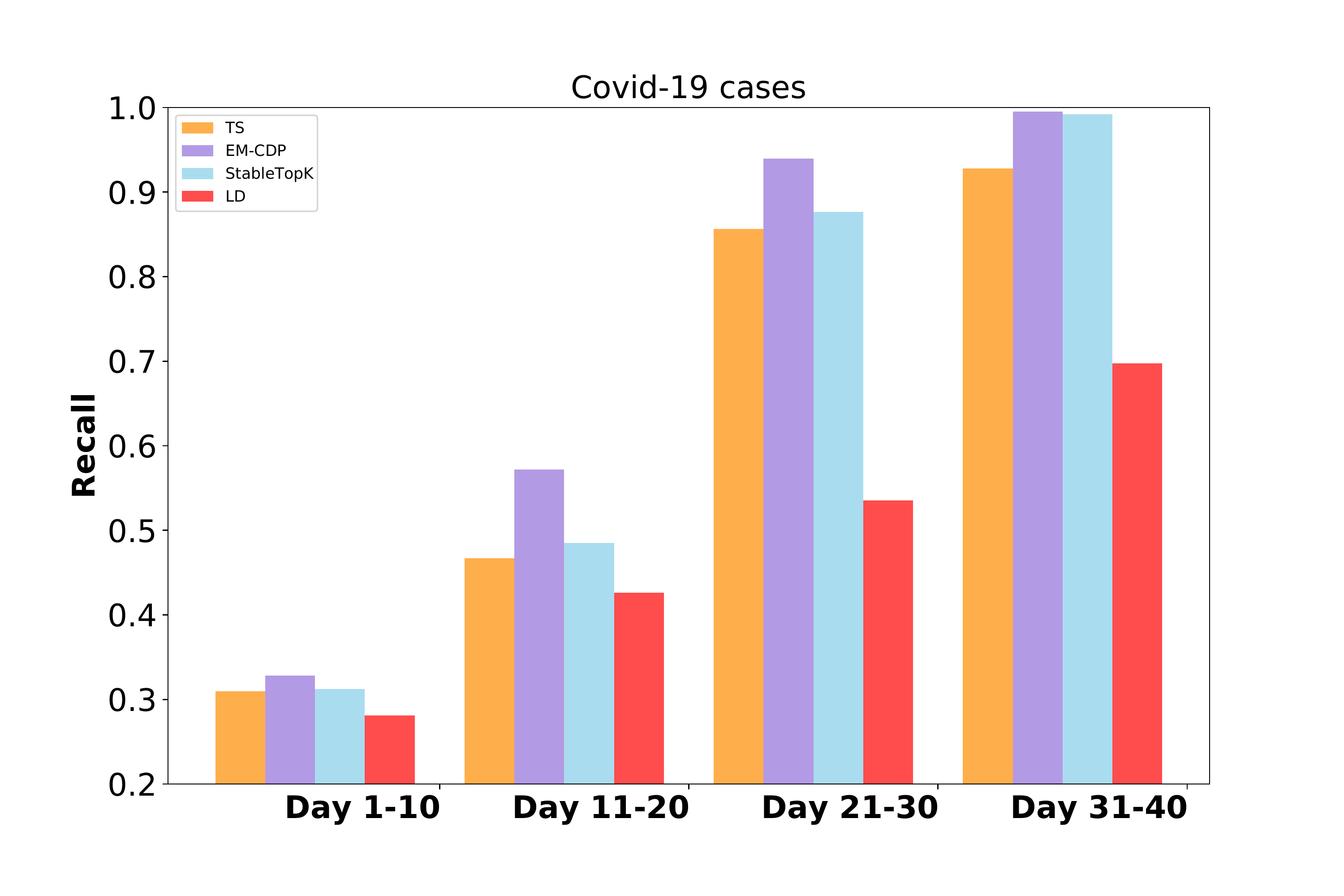}}
	\subfigure[Synthetic dataset \label{fig: sync}]{
		\includegraphics[width=0.45\textwidth]{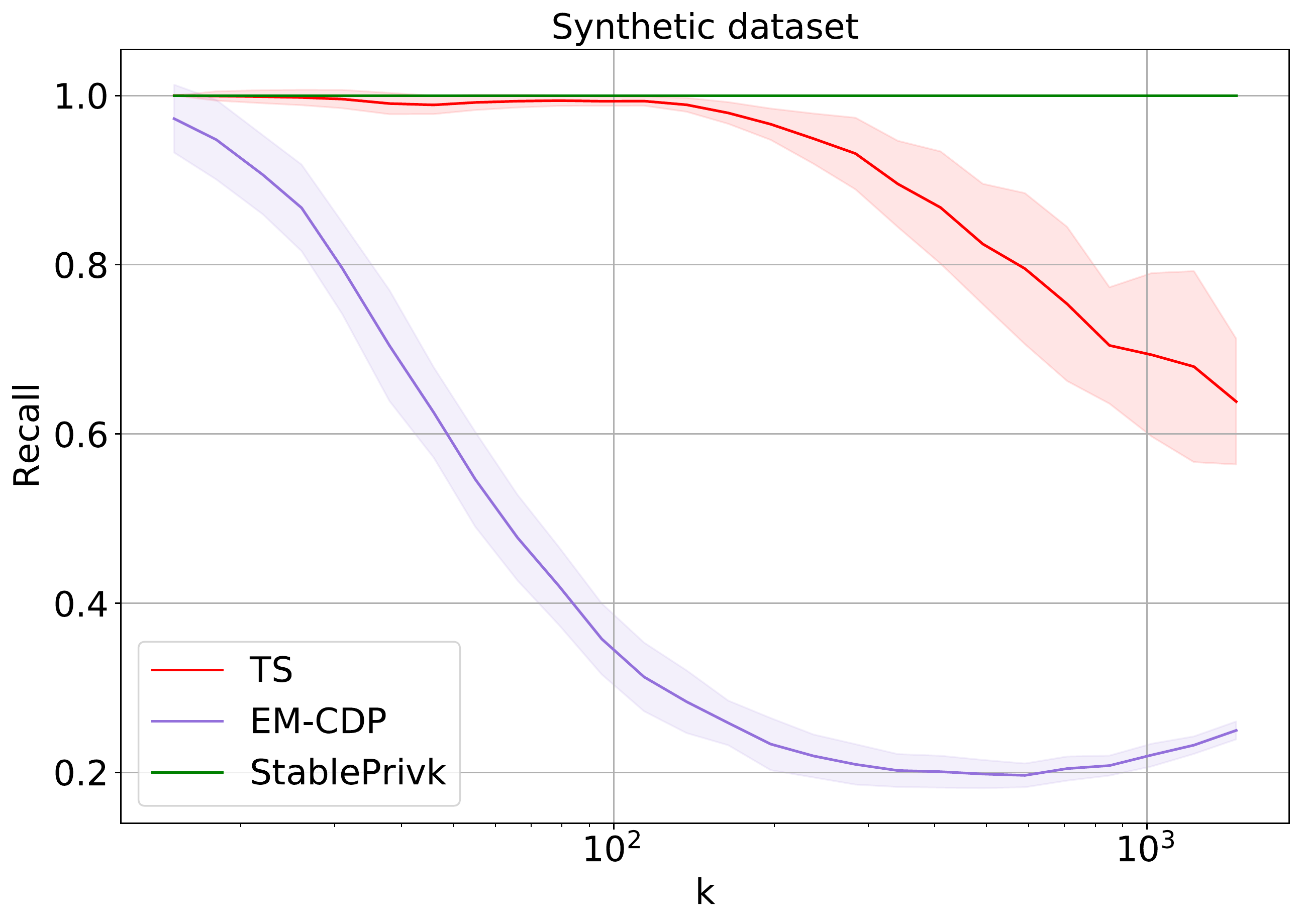}}
	\caption{ Figure~\ref{fig: exp_covid} evaluates composed top-$k$ selection with varied data distribution. Figure~\ref{fig: sync} compares top-$k$ selection with different choice on $k$. }
\end{figure}

\begin{table*}
	\centering
	\vspace{-4mm}
\centerline{
	\resizebox{\columnwidth}{!}{
		\tabcolsep=0.001cm
	\begin{tabular}{c@{\hskip 5mm}c@{\hskip 5mm}|c@{\hskip 6mm}c@{\hskip 6mm}c@{\hskip 6mm}|c@{\hskip 6mm}c@{\hskip 6mm}c@{\hskip 6mm}|c@{\hskip 5mm}c@{\hskip 5mm}c}
		\toprule
		Datasets & Methods & \multicolumn{3}{c}{$\epsilon=0.4$}&   \multicolumn{3}{|c}{$\epsilon=0.8$} &   \multicolumn{3}{|c}{$\epsilon=1.0$} \\
		\multirow{5}{*}{BrightKite}  &-&k:3&10&50&k:3&10&50&k:3&10& 50\\
		\hline
		&TS  &1.00&0.77&0.14&1.00&0.80&0.16&1.00&0.80&0.18\\
		    &LD  &1.00&0.47&0.10&1.00&0.79&0.25&1.00&0.88&0.28 \\
		 &  \textsc{StableTopK}&1.00&\textbf{0.91}&\textbf{0.61}&\textbf{1.00}&\textbf{1.00}&0.67&1.00&\textbf{1.00}&\textbf{0.67}\\
		\hline
			\multirow{5}{*}{Foursquare}  &TS  &\textbf{1.00}&0.64&0.11&\textbf{1.00}&0.90&0.18&\textbf{1.00}&0.90&0.18\\
			&LD  &0.68&0.62&0.13&0.75&0.85&0.24&0.91&0.94&0.28 \\
			& \textsc{StableTopK} &\textbf{1.00}&\textbf{0.72}&\textbf{0.48}&\textbf{1.00}&\textbf{1.00}&\textbf{0.67}&\textbf{1.00}&\textbf{1.00}&\textbf{0.67}\\
				\hline
	\end{tabular}}}
	\caption{EXP2: Comparison of top-$k$ selection with a fixed $k$}
	\label{tab: exp2}
	\bigskip
		\centerline{
	  \resizebox{ 0.8\columnwidth}{!}{
		\tabcolsep=0.001cm
		\begin{tabular}{c@{\hskip 5mm}|c@{\hskip 5mm}|c@{\hskip 5mm}|c@{\hskip 5mm}|c}
			\toprule
			Datasets & Methods &  $\epsilon$& Accuracy&   Non-Private  Accuracy \\
			\hline
			\multirow{3}{*}{CelebA}  &  PATE& >10&	$85.0 \pm 0.1\%$ &	\multirow{3}{*}{$89.5\pm 0\%$} \\
			& PATE-$\tau$&7.7 &$85.1\pm 0.2\%$& \\
			& StableTopK& 3.6&$85.0 \pm 0.2\%$&\\
			\hline
		\end{tabular}}
	}
	\caption{EXP3: Evaluations on CelebA datasets with $\delta=10^{-6}$.}
	\label{tab: celeba}
\end{table*}
\noindent\textbf{EXP2: Multiple top-$k$ queries}
The behaviors of top-$k$ mechanisms can be varied for different data distribution, $k$ and privacy budgets.
  To study their behaviors, we  design two groups of experiments  --- one with a fixed $k$ but various data distribution and another with a range of $k$.

We first consider the case when $k$ is fixed with an instantiation in releasing daily top-$k$ states that has the largest Covid-19 cases. 
We will use the United States Covid-19 Cases by State from 2020-03-12 to 2020-05-12 and assume one person can contribute at most one case on the daily case report.    

\noindent\textbf{Baselines and Metrics} TS and LD are two baselines.  As the CDP/RDP analysis of both TS and LD is unknown, we use advanced composition to allocate the privacy budget $(\epsilon, \delta)$ over $T$ queries.  

Another baseline we will use is the exponential mechanism EM-CDP. The exponential mechanism admits a tighter CDP analysis due to its bounded range property. We will add Gumbel noise to each count and report the indices with the top-$k$ highest noisy counts. 
In the experiment, we average the recall of the top-$k$ set over a fixed time interval (e.g., $T=10$ days) and repeat each experiment for $100$ trials.

In Figure~\ref{fig: exp_covid},  we consider $k=15$ and $(0.1, 10^{-6})$-DP instances of EM-CDP, TS, LD and our StableTopK(fixed K)  For each mechanism, we first calibrate their noise scale such that the composition over $10$ days satisfy $(0.1, 10^{-6})$-DP.  We then simulate four groups of the time interval: Day 1-10, Day 11-20, Day 21-30 and Day 31-40  such that $\#$ composition is the same  but the distribution of daily covid-19 cases is varied. Note that there was a exponential growth on the covid-19 cases between 2020-03-12 to 2020-05-12, which leads to an increasing gap  between the $k$-th and the $k+1$-th count.

EM-CDP performs best in all time intervals, especially when there are small gaps between the vote counts (see Day 1-10 and Day 11-20). When the gap is small, both TS and StableTopK will likely fail on the stability test, which will result in a substitute of the exponential mechanism using half of the privacy budget. Therefore, both TS and StableTopk perform worse than EM-CDP. The number of indices returned by LD can be smaller than $k$, especially when there is no large gap among vote counts. Thus it obtains the worst recall rate over all intervals.
When the gap is large, all mechanisms achieve better performance. StableTopk is still slightly worse than EM-CDP on Day 31-40 though the latter requires splitting the privacy budget into $k$ pieces. We conjecture this is because the $k$ we use is small, which diminished the effect of  ``unavoidable $O(\sqrt{k})$ dependence in $\epsilon$'' in EM-CDP.

Therefore, we next construct a synthetic example to investigate the effect on $k$.  The synthetic histogram has $15000$ bins,  where all top $k$ bins have  $700$ counts, and the remaining $15000-k$ bins have $0$ counts.  We range $k$ from $10$ to $1500$ with $(0.15, 10^{-6})$-DP instances of EM-CDP, TS and our StableTopK.
The line in Figure~\ref{fig: sync} plots the mean recall rate (of answering one-time top-k query) from $100$ trials, and the shaded region spans with the standard deviation for each mechanism.  StableTopK outperforms all mechanisms, especially when $k$ is large.  This is because the utility of StableTopK is determined by the gap at the $k$-th position rather than how large a k is.   StableTopK is clearly better than TS when $k$ is large as the latter needs to test the distance to instability for at most $k$ positions. We note that~\citet{lyu2017understanding} has a similar observation --- EM outperfoms SVT in  the non-interactive setting.
 Though EM-CDP admits a tight composition through CDP, its peeling procedure requires splitting its privacy budget into $k$ splits for each subroutine.   Therefore, EM-CDP is worse than \textsc{StableTopK} when $k$ is sufficiently large. 
%
%

\noindent\textbf{EXP3: Evaluation with multi-label classification tasks.}
\textbf{CelebA}~\citep{liu2015deep} is a large-scale face attribute dataset with $220k$ celebrity images, each with $40$ attribute annotations.
To instantiate the PATE framework, we take the original training set as the private domain and split it into $800$ teachers. Similar to the implementation from~\citet{private_kNN}, we randomly pick $600$ testing data to simulate unlabeled public data and using the remaining data for testing. We train each teacher model via a Resnet50m structure~\citep{he2016deep}.  As there is no strict restriction on an exact $k$ output, we apply a Gaussian variant of Algorithm~\ref{alg: main_ada} (i.e., replace the second step in Algorithm~\ref{alg: main_ada} with RNM-Gaussian) with noisy parameters $\delta_t=10^{-9}, \sigma_1=50, \sigma=60$. $\sigma_1$ is used in RNM-Gaussian.  
 Our result is compared to two baselines: PATE~\citep{papernot2018} and PATE-$\tau$~\citep{private_kNN}.  In PATE, the global sensitivity is $40$, as each teacher can vote for all attributes. To limit the global sensitivity, PATE-$\tau$ applies a $\tau$-approximation by restricting each teacher's vote that no more than $\tau$ attributes or contributions will be averaged to $\tau$. We remark that though the $\tau$ approximation approach significantly reduces the global sensitivity, the choice on $\tau$ shall not be data-dependent. 
In Table~\ref{tab: celeba}, we align the accuracy of three DP approaches and compare their accuracy at the test set.  We report the privacy cost based on the composition of over $600$ labeling queries from the public domain.  For StableTopK,  the reported $\epsilon$ is based on the RDP to DP conversion rule using $\tilde{\delta} = 10^{-6} - 600\times 10^{-9}$.
Each experiment is repeated five times. Our StableTopK (adaptive K) algorithm saves half of the privacy cost compared to PATE-$\tau$ while maintaining the same accuracy.

\vspace{-2mm}
\section{Conclusion}
\vspace{-2mm}
To conclude, we develop an efficient private top-$k$ algorithm with an end-to-end RDP analysis. We  generalize  the Report-Noisy-Max algorithm, the \textit{propose-test-release} framework and the \textit{distance-to-instability} framework with Gaussian noise and formal RDP analysis. In the downstream task, we show our algorithms improve the performance of the model-agnostic framework with multi-label classification. We hope this work will spark more practical applications of private selection algorithms.

\section*{Acknowledgments}
\vspace{-4mm}
The work was partially supported by NSF Award \# 2048091, a Google Research Scholar Award and a gift from NEC Labs. Yuqing was partially supported by a Google PhD Fellowship. The authors would like to thank the anonymous reviewers for helpful feedback, as well as Jinshuo Dong and Ryan Rogers to helpful discussions related to the exponential mechanism.

\bibliographystyle{plainnat}
\bibliography{DP}



\newpage
\appendix
\onecolumn
\vspace{-4mm}
\section*{Organization of the Appendix}

In the appendix, we first state the composition and conversion rules in Sec~\ref{sec: rule}. In Sec~\ref{sec:ptr-laplace} we provide the description and the analysis of PTR-with Laplace-mechanism-based tests and Gaussian mechanism-based tests. Finally, Sec~\ref{sec: proof} provides all other proofs that were  omitted in the main paper, including that for the generalized (and Gaussian) RNM.


\section{Conversion and composition rules for approximated RDP}\label{sec: rule}
Recall our definition of approximate RDP.
\begin{definition}[Approximate Renyi Differential Privacy]
	We say a randomized algorithm $\cM$ is $\delta$-approximate-$(\alpha, \epsilon_\cM(\alpha))$-RDP with order $\alpha \geq 1$, if for all neighboring dataset $D$ and $D'$, there exist events $E$ (depending on $\cM(D)$ )and $E'$ (depending on $\cM(D')$) such that $\pr[E]\geq 1-\delta$ and $\pr[E']\geq 1-\delta$, and $\forall \alpha\geq 1$, we have
	$	\mathbb{D}_{\alpha}(\cM(D)|E||   \cM(D')|E')\leq  \epsilon_\cM(\alpha)$.
\end{definition}
When $\delta$ is 0, $0$-approximate-RDP is RDP.   Similar to \citep{bun2016concentrated}, the approximate-RDP satisfies the composition and post-processing property. 
\begin{lemma}[Composition rule]
	Let $\cM_1$ satisfies $\delta_1$-approximate-$(\alpha, \epsilon_{\cM_1}(\alpha))$-RDP and  $\cM_2$ satisfies $\delta_2$-approximate-$(\alpha, \epsilon_{\cM_2}(\alpha))$-RDP. Then the composition of $\cM_1$ and $\cM_2$ satisfies 
	$(\delta_1+\delta_2)$-approximate-$(\alpha, \epsilon_{\cM_1}(\alpha)+ \epsilon_{\cM_2}(\alpha))$-RDP.
\end{lemma}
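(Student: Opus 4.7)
The plan is to build the witnessing events for the composition out of the events guaranteed by each constituent mechanism and then apply the standard adaptive composition rule for Rényi divergence conditioned on those events. Fix neighboring datasets $D, D'$. First I would invoke the approximate RDP guarantee for $\cM_1$ to obtain events $E_1$ (a function of $\cM_1(D)$) and $E_1'$ (a function of $\cM_1(D')$) with $\pr[E_1], \pr[E_1']\geq 1-\delta_1$ and $\mathbb{D}_\alpha(\cM_1(D)\mid E_1 \,\|\, \cM_1(D')\mid E_1')\leq \epsilon_{\cM_1}(\alpha)$. Since $\cM_2$ is applied adaptively to the output of $\cM_1$, for every realization $y$ of $\cM_1$, I would apply the approximate RDP property of $\cM_2(\cdot;y)$ to obtain events $E_2(y), E_2'(y)$ (each of probability at least $1-\delta_2$ under the respective distributions) with $\mathbb{D}_\alpha(\cM_2(D;y)\mid E_2(y) \,\|\, \cM_2(D';y)\mid E_2'(y))\leq \epsilon_{\cM_2}(\alpha)$.

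Next I would define the composite witnessing events as $E = E_1 \cap E_2(\cM_1(D))$ and $E' = E_1'\cap E_2'(\cM_1(D'))$. A union bound together with the tower rule over $\cM_1$ gives $\pr[E], \pr[E']\geq 1-\delta_1-\delta_2$, which is the required probability condition in the definition of $(\delta_1+\delta_2)$-approximate RDP.

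For the divergence bound, I would use the well-known adaptive composition identity for Rényi divergence: if $P_1, Q_1$ are distributions over $\mathcal{Y}$ and $P_{2\mid y}, Q_{2\mid y}$ are conditional distributions indexed by $y$, then
\begin{equation*}
\mathbb{D}_\alpha\bigl(P_1 \otimes P_{2\mid \cdot} \,\|\, Q_1\otimes Q_{2\mid \cdot}\bigr) \leq \mathbb{D}_\alpha(P_1\|Q_1) + \sup_y \mathbb{D}_\alpha(P_{2\mid y}\|Q_{2\mid y}).
\end{equation*}
Applying this to the conditional laws of $(\cM_1, \cM_2)$ on $D$ and $D'$ restricted to the events $E$ and $E'$ respectively bounds the joint divergence by $\epsilon_{\cM_1}(\alpha)+\epsilon_{\cM_2}(\alpha)$, which is the desired RDP guarantee of the composition.

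The main technical subtlety, and the step I would be most careful about, is the adaptive conditioning: the event $E_2(\cM_1(D))$ depends on the output of $\cM_1$, so one must verify that conditioning on $E$ factorizes correctly as ``first condition $\cM_1$ on $E_1$, then condition $\cM_2(\cdot;y)$ on $E_2(y)$'' and similarly on the $D'$ side. This is true because $E$ is an intersection of a $\cM_1$-measurable event with an event that is $\cM_2$-measurable given $\cM_1$; the joint density factors accordingly, and the Rényi divergence of the joint then decomposes as the sum of the two conditional divergences above. Once this bookkeeping is in place, the proof is just a composition of the two pieces plus the union bound on the failure events.
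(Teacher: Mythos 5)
The paper itself does not prove this lemma; it only asserts that approximate RDP ``preserves all the properties'' of approximate zCDP and defers to \citet{bun2016concentrated}, so the relevant comparison is with the standard argument for composition of approximate zCDP there --- which is exactly the skeleton you use: intersect the witnessing events, union-bound the failure probability, and apply adaptive Rényi composition to the conditioned laws. Your overall structure, and the bound $\pr[E]\geq 1-\delta_1-\delta_2$, are fine.

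However, the step you yourself single out as the ``main technical subtlety'' is where the argument as written breaks. Conditioning the joint law of $(\cM_1,\cM_2)$ on $E = E_1\cap E_2(\cM_1(D))$ does \emph{not} produce the adaptive composition of $\cM_1(D)\mid E_1$ with $\cM_2(\cdot\,;y)\mid E_2(y)$: the conditioned joint has density proportional to $P_1(y)\mathbf{1}_{E_1}(y)\,P_{2\mid y}(z)\mathbf{1}_{E_2(y)}(z)$ with the single normalizer $\pr[E]=\sum_{y\in E_1}P_1(y)\,\pr[E_2(y)\mid \cM_1(D)=y]$, so its first marginal is $P_1\mid E_1$ \emph{reweighted} by $y\mapsto\pr[E_2(y)\mid \cM_1(D)=y]$, which need not be constant in $y$; the factorization you assert holds only when it is. If you instead expand the Rényi sum directly with the intersected events, the conditional guarantee for $\cM_2$ leaves behind factors $P_{2\mid y}[E_2(y)]^{\alpha}\,Q_{2\mid y}[E_2'(y)]^{1-\alpha}$ and the normalizers contribute $\pr[E]^{-\alpha}\pr[E']^{\alpha-1}$; these do not cancel and give $\epsilon_{\cM_1}(\alpha)+\epsilon_{\cM_2}(\alpha)$ only up to an additive slack on the order of $\frac{\alpha}{\alpha-1}\log\frac{1}{1-\delta_1-\delta_2}$. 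The clean statement can be recovered by first \emph{normalizing} the events: augment each mechanism with independent auxiliary uniform randomness and shrink $E_1$ and each $E_2(y)$ (and their primed counterparts) to have probability exactly $1-\delta_1$ and $1-\delta_2$ respectively. Since the auxiliary coin is independent of the output, the conditional output laws --- and hence the divergence bounds --- are unchanged, the normalizer becomes the constant $(1-\delta_1)(1-\delta_2)$, the factorization becomes exact, and the composition identity then yields the stated bound with failure probability $1-(1-\delta_1)(1-\delta_2)\leq\delta_1+\delta_2$. Without such a normalization step (or an explicit accounting of the residual factors), your proof is incomplete at precisely the point you flagged.
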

\begin{lemma}[Conversion rule]
	Let $\cM$ satisfies $\delta_1$-approximate-$(\alpha, \epsilon_{\cM}(\alpha))$-RDP. Then it also satifies $(\epsilon_\cM(\alpha)+ \frac{\log(1/\delta)}{\alpha-1}, \delta+\delta_1)$-DP.
	\begin{proof}
		$\cM$ satisfies $\delta_1$-approximate-$(\alpha, \epsilon_{\cM}(\alpha))$-RDP implies that there exists an pairing event $E$ and $E'$ such that  $\mathbb{D}_\alpha (\cM(D)|E || \cM(D')|E')\leq \epsilon_{\cM}(\alpha) $ and $\pr[E]\geq 1-\delta_1, \pr[E']\geq 1-\delta_1$.
		Condition on $E$ and $E'$, we apply the RDP conversion rule~\citep{mironov2017renyi}, which gives us $(\frac{\log(1/\delta)}{\alpha-1} + \epsilon_\cM(\alpha))$-DP. Then we combine the failure probability  $\delta_1$ and $\delta$, which completes the proof.
	\end{proof}
\end{lemma}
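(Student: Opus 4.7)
The plan is to directly reduce to the standard (non-approximate) RDP-to-DP conversion by conditioning on the high-probability events promised by the approximate-RDP definition, and then to ``pay'' $\delta_1$ in the failure probability to move back to the unconditional distributions.

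First I would unpack the hypothesis. By the definition of $\delta_1$-approximate-$(\alpha,\epsilon_\cM(\alpha))$-RDP, there exist events $E$ and $E'$ (determined by $\cM(D)$ and $\cM(D')$ respectively) with $\Pr[E]\ge 1-\delta_1$, $\Pr[E']\ge 1-\delta_1$, and
\[
\mathbb{D}_\alpha\big(\cM(D)\,\big|\,E\ \big\|\ \cM(D')\,\big|\,E'\big)\le \epsilon_\cM(\alpha).
\]
The conditional distributions $\cM(D)|E$ and $\cM(D')|E'$ are bona fide probability measures on the output space, so I can feed them into Lemma~\ref{lem: rdp2dp} (the ordinary Mironov-style RDP-to-DP conversion). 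That step yields, for any measurable set $S$ and any $\delta\in(0,1)$,
\[
\Pr[\cM(D)\in S\mid E]\le e^{\epsilon_\cM(\alpha)+\log(1/\delta)/(\alpha-1)}\,\Pr[\cM(D')\in S\mid E']+\delta.
\]

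Next I would lift this conditional DP bound back to the unconditional one. Writing $\epsilon^\star := \epsilon_\cM(\alpha)+\log(1/\delta)/(\alpha-1)$ and splitting by $E$ versus $E^c$,
\[
\Pr[\cM(D)\in S]=\Pr[\cM(D)\in S,\,E]+\Pr[\cM(D)\in S,\,E^c]\le \Pr[\cM(D)\in S\mid E]+\delta_1,
\]
using $\Pr[E^c]\le \delta_1$ on the tail term and $\Pr[\cdot,E]\le \Pr[\cdot\mid E]$ on the main term. Substituting the conditional DP bound and using $\Pr[\cM(D')\in S\mid E']\le \Pr[\cM(D')\in S]/\Pr[E']$ followed by $\Pr[\cM(D')\in S,E']\le\Pr[\cM(D')\in S]$, I obtain
\[
\Pr[\cM(D)\in S]\le e^{\epsilon^\star}\,\Pr[\cM(D')\in S]+\delta+\delta_1,
\]
which is exactly the $(\epsilon_\cM(\alpha)+\log(1/\delta)/(\alpha-1),\ \delta+\delta_1)$-DP statement claimed.

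The main obstacle is the cosmetic $1/\Pr[E']$ factor that naturally appears when I try to replace $\Pr[\cM(D')\in S\mid E']$ by $\Pr[\cM(D')\in S]$; a careless accounting would introduce an extra $1/(1-\delta_1)$ multiplicative term. I plan to handle it cleanly by noting $\Pr[\cM(D')\in S,E']\le \Pr[\cM(D')\in S]$ directly before dividing by $\Pr[E']$, i.e. bounding the joint probability rather than the conditional one, so that the upper bound reduces to $e^{\epsilon^\star}\Pr[\cM(D')\in S]+\delta$ inside the event $E$. Collecting the two $\delta_1$-sized ``bad event'' contributions (one from $E^c$ on the $\cM(D)$ side, and the slack absorbed on the $\cM(D')$ side) into a single $+\delta_1$ term gives the stated approximate DP bound.
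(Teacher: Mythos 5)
Your overall route is the same as the paper's: unpack the events $E,E'$ from the definition of approximate RDP, apply the ordinary RDP-to-DP conversion (Lemma~\ref{lem: rdp2dp}) to the conditional laws $\cM(D)\mid E$ and $\cM(D')\mid E'$, and then pay the failure probabilities additively. The decomposition $\Pr[\cM(D)\in S]\le \Pr[\cM(D)\in S\mid E]+\delta_1$ is correct.

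The one place where you try to be more careful than the paper is the step replacing $\Pr[\cM(D')\in S\mid E']$ by $\Pr[\cM(D')\in S]$, and your proposed fix does not actually work. Since $\Pr[\cM(D')\in S\mid E']=\Pr[\cM(D')\in S,\,E']/\Pr[E']$, bounding the joint by the marginal \emph{before} dividing still leaves the denominator, so the best you get this way is $\Pr[\cM(D')\in S\mid E']\le \Pr[\cM(D')\in S]/(1-\delta_1)$. Substituting the joint $\Pr[\cM(D')\in S,E']$ in place of the conditional, as your last paragraph suggests, goes in the wrong direction: the conditional is the \emph{larger} quantity, and it is the one sitting on the right-hand side of the inequality you need to upper bound. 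The honest conclusion of your (and the paper's) argument is therefore $(\epsilon_\cM(\alpha)+\tfrac{\log(1/\delta)}{\alpha-1}+\log\tfrac{1}{1-\delta_1},\ \delta+\delta_1)$-DP; the extra term $\log\tfrac{1}{1-\delta_1}\le \delta_1/(1-\delta_1)$ is negligible for the tiny $\delta_1$ used in practice, and the paper's own proof silently absorbs it in the phrase ``combine the failure probability.'' So your proof matches the paper's in substance, but you should either carry the extra $\log\tfrac{1}{1-\delta_1}$ term explicitly or state that it is being absorbed, rather than claim the $1/\Pr[E']$ factor disappears.
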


\section{Propose-Test-Release with Gaussian and Laplace noise}\label{sec:ptr-laplace}

Instead, the PTR framework is less restrictive than the smooth sensitivity.
This approach first proposes a good estimate of the local sensitivity and then testing whether this is a valid upper bound. If the test passes, we then calibrate the noise according to the proposed test.  If the test instead failed, the algorithm stops and returns ``no-reply''.

\begin{lemma}\label{lem: ptr}
	Let $\hat{q}_k$ be a private release of $q_k$ that obeys $(\alpha, \epsilon_{gap}(\alpha))$-RDP and $\pr[\hat{q}_k \geq q_k]\leq \delta_t$ (where the probability is only over the randomness in releasing $\hat{q}_k$). If $\hat{q}_k$ passes the threshold check (i.e., $\hat{q}_k \geq 1$), the algorithm releases the set of top-$k$ indices directly satisfes $\delta_t$-approximately-$(\alpha, \epsilon_{gap}(\alpha))$-RDP. 
\end{lemma}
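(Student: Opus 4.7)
The plan is to verify the $\delta_t$-approximate-$(\alpha,\epsilon_{gap}(\alpha))$-RDP definition directly. For each pair of neighboring datasets $D,D'$ I will exhibit events $E$ (on the randomness generating $\cM(D)$) and $E'$ (on the randomness generating $\cM(D')$), each of probability at least $1-\delta_t$, such that $\mathbb{D}_\alpha(\cM(D)|E\,\|\,\cM(D')|E')\le \epsilon_{gap}(\alpha)$. The key structural fact is that the output depends on the dataset only through (i) the value of $\hat{q}_k$, which determines the pass/fail decision, and (ii) the top-$k$ indicator $\mathbb{I}(D)$, which is returned only on the ``pass'' branch. By the local-sensitivity lemma, $\mathbb{I}(D)=\mathbb{I}(D')$ whenever $q_k(D)>1$; contrapositively, $\mathbb{I}(D)\neq\mathbb{I}(D')$ forces $q_k(D)\le 1$ and, symmetrically, $q_k(D')\le 1$. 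This suggests splitting the argument into two exhaustive cases based on whether $\mathbb{I}(D)$ and $\mathbb{I}(D')$ coincide.

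In the first case, when $\mathbb{I}(D)=\mathbb{I}(D')$, take $E=E'$ to be the entire probability space. Then $\cM(D)$ is the deterministic post-processing $f$ of $\hat{q}_k$ given by $f(x)=\perp$ for $x<1$ and $f(x)=\mathbb{I}(D)$ otherwise; because $\mathbb{I}(D)=\mathbb{I}(D')$, the \emph{same} function $f$ maps $\hat{q}_k'$ to $\cM(D')$. Applying the data-processing inequality for Renyi divergence to the common map $f$ then yields
\[\mathbb{D}_\alpha(\cM(D)\,\|\,\cM(D'))\le \mathbb{D}_\alpha(\hat{q}_k\,\|\,\hat{q}_k')\le \epsilon_{gap}(\alpha),\]
which is the desired bound.

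In the second case, $\mathbb{I}(D)\neq\mathbb{I}(D')$, so $q_k(D)\le 1$ and $q_k(D')\le 1$. Since $q_k\le 1$ implies $\{\hat{q}_k\ge 1\}\subseteq\{\hat{q}_k\ge q_k\}$, the hypothesis $\pr[\hat{q}_k\ge q_k]\le\delta_t$ applied on each side gives $\pr[\hat{q}_k\ge 1]\le\delta_t$ and likewise for $\hat{q}_k'$. Choose $E=\{\hat{q}_k<1\}$ and $E'=\{\hat{q}_k'<1\}$; both events have probability at least $1-\delta_t$, and under either event the algorithm returns $\perp$ deterministically. The two conditional distributions are therefore identical point masses at $\perp$, so their Renyi divergence is $0\le \epsilon_{gap}(\alpha)$.

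The only place requiring care is verifying that the case split is exhaustive and that the chosen events satisfy the formal requirements of the approximate-RDP definition (in particular, they are measurable with respect to the mechanism's internal randomness, which holds because $\hat{q}_k$ is generated by that randomness). The implication $\pr[\hat{q}_k\ge q_k]\le\delta_t\Rightarrow \pr[\hat{q}_k\ge 1]\le\delta_t$ used in the second case is the sole invocation of the hypothesis on $\hat{q}_k$, and the remainder of the argument reduces either to post-processing or to the triviality of comparing identical point masses. I do not anticipate a substantive obstacle beyond this bookkeeping.
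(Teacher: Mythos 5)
Your proof is correct and follows essentially the same route as the paper's: the same case split on whether the top-$k$ sets of $D$ and $D'$ coincide, with the data-processing inequality handling the first case and a high-probability event forcing output $\perp$ in the second. The only cosmetic difference is that you condition on $\{\hat{q}_k<1\}$ directly rather than on the validity of the lower bound $\{\hat{q}_k\le q_k\}$ as the paper does; both events have probability at least $1-\delta_t$ and yield the same conclusion.
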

In the proof, the local sensitivity depends on the private data only through the gap $q_k$. Thus we can construct a private lower bound of $q_k$ such that --- if the PTR test passes, then with probability at least $1-\delta_t$, the local sensitivity is 0. Therefore we do not need to randomize the output. If the PTR test fails, the algorithm is $\epsilon_{gap}(\alpha)$-RDP due to post-processing. 

\begin{remark}
	The bottleneck of PTR approaches is often the computation efficiency of bounding the local sensitivity. Our algorithm addresses this issue by exploiting the connection to $q_k$, which only takes $O(1)$ time to validate the local sensitivity. Moreover, most prior work on PTR approaches only accounts for approximate differential privacy. Our new RDP analysis enables PTR algorithms to permit tighter analyses of privacy loss over multiple releases of statistics. 	
\end{remark}

Our algorithm applies a variant of the PTR framework, which first constructs a high-confidence private upper bound of the local sensitivity and then calibrates the noise accordingly. We formalize the idea in the following theorem.
\begin{algorithm}[t]
	\caption{Propose-test-release (PTR) with Gaussian Noise}
	\label{alg: ptr_gau}
	\begin{algorithmic}[1]
		\STATE{ \textbf{Input} Histogram $h$, noise parameter $ \sigma_2$ and the privacy parameter $\delta_t$}
		\STATE  Let $i_{(1)}, ..., i_{(k)}$ be the unordered indices of the sorted histogram.
		\STATE{Set the gap} $q_k = h_{(k)} -  h_{(k+1)}$
		\STATE {Propose a private lower bound of $q_k$:} $\hat{q}_k=  \max\{1, q_k\} + \cN(0, \sigma_2^2) -\sigma_2\sqrt{2\log(1/\delta_t)}$
		\STATE \textbf{If} $\hat{q}_k \leq 1$, \textbf{Return $\perp$}
		\STATE \textbf{Return $i_{(1)}, ... , i_{(k)}$} 
	\end{algorithmic}
\end{algorithm}

\begin{algorithm}[t]
	\caption{Propose-test-release (PTR) with Laplace Noise}
	\label{alg: ptr_lap}
	\begin{algorithmic}[1]
		\STATE{ \textbf{Input} Histogram $h$, noisy gap $\hat{q}_k$,  privacy parameter $\delta_t, \epsilon$}
		\STATE Let $i_{(1)}, ..., i_{(k)}$ be the unordered indices of the sorted histogram.
		\STATE{Set the gap} $q_k = h_{(k)} -  h_{(k+1)}$
		\STATE {Propose a private lower bound of $q_k$: $\hat{q}_k =q_k + \mathrm{Lap}(1/\epsilon) -\log(1/\delta_t)/\epsilon$} 
		\STATE \textbf{If} $\hat{q}_k \leq 1$, \textbf{Return $\perp$}
		\STATE \textbf{Return $i_{(1)}, ... , i_{(k)}$} 
	\end{algorithmic}
\end{algorithm}

Next, we work out the detailed calibration of PTR approaches using Laplace/Gaussian noise and provide their privacy guarantee in the following corollary. 
\begin{corollary}[Privacy guarantee of PTR variants]
	Algorithm~\ref{alg: ptr_lap} (PTR-Laplace) satisfies $(\epsilon, \delta_t)$-DP. Algorithm~\ref{alg: ptr_gau} (PTR-Gaussian)  satisfies $\delta_t$-approximately-$(\alpha, \frac{\alpha}{2\sigma_2^2})$-RDP.
\end{corollary}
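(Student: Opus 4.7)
My plan is to observe that both parts of the corollary are immediate instantiations of the meta-lemma for PTR (Lemma~\ref{lem: ptr}). That lemma reduces each claim to verifying two hypotheses for the corresponding noisy lower bound $\hat{q}_k$: (i) the mechanism releasing $\hat{q}_k$ satisfies the claimed RDP/pure-DP guarantee, and (ii) $\hat{q}_k$ is a valid lower bound on $q_k$ except with probability at most $\delta_t$. Once (i) and (ii) are in hand, the lemma certifies that the release of the top-$k$ indices (or $\perp$) as a whole is $\delta_t$-approximate RDP with the same parameter.

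For PTR-Laplace, the first step is to note that $q_k = h_{(k)} - h_{(k+1)}$ has global $\ell_1$-sensitivity $1$; this is exactly the case analysis already carried out in the proof of the local-sensitivity-of-the-gap lemma (adding or removing a single user can shift $h_{(k)}$ and $h_{(k+1)}$ by at most $1$ in the same direction, so their difference changes by at most $1$). Hence $q_k + \mathrm{Lap}(1/\epsilon)$ is $\epsilon$-DP by the standard Laplace mechanism, which is Lemma~\ref{lem: ptr}'s hypothesis at $\alpha \to \infty$. For the tail, the Laplace CDF gives $\Pr[\mathrm{Lap}(1/\epsilon) > \log(1/\delta_t)/\epsilon] = \tfrac{1}{2}\delta_t \le \delta_t$, so $\hat{q}_k \le q_k$ with probability at least $1-\delta_t$. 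Plugging into the lemma and using the approximate-RDP-to-DP conversion (trivial in this pure-DP case) yields $(\epsilon,\delta_t)$-DP.

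For PTR-Gaussian, I would first observe that $\max\{1, q_k\}$ is a $1$-Lipschitz post-processing of $q_k$ and hence inherits global sensitivity $1$, so adding $\cN(0,\sigma_2^2)$ gives $(\alpha,\alpha/(2\sigma_2^2))$-RDP by the standard Gaussian mechanism calculation. For the tail, the sub-Gaussian bound gives $\Pr[\cN(0,\sigma_2^2) > \sigma_2\sqrt{2\log(1/\delta_t)}] \le \delta_t$. The one bookkeeping subtlety is that Lemma~\ref{lem: ptr} asks $\hat{q}_k$ to lower-bound $q_k$, whereas Algorithm~\ref{alg: ptr_gau} lower-bounds $\max\{1, q_k\}$. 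This is harmless because $\max\{1, q_k\} > 1 \iff q_k > 1$: under the high-probability event $\hat{q}_k \le \max\{1, q_k\}$, the test-pass condition $\hat{q}_k > 1$ still certifies $q_k > 1$, hence local sensitivity zero and identical top-$k$ sets on neighboring databases. The lemma then delivers $\delta_t$-approximate-$(\alpha,\alpha/(2\sigma_2^2))$-RDP.

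I do not anticipate a genuine obstacle; everything is a mechanical check given Lemma~\ref{lem: ptr}. The one place where care is needed is the Gaussian variant's $\max\{1,\cdot\}$ wrapper, which is what allows the same high-probability tail event to do double duty as a certificate that the test-pass output reveals no noise-level information about neighbors. That equivalence, together with routine sensitivity and tail calculations, completes both claims.
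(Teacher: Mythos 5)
Your proposal is correct and matches the paper's (largely implicit) argument: the corollary is stated as a direct instantiation of Lemma~\ref{lem: ptr} with the standard Laplace/Gaussian mechanism guarantees for the sensitivity-$1$ gap $q_k$ and the corresponding tail bounds, which is exactly what you carry out. Your extra remark about the $\max\{1,\cdot\}$ wrapper is a worthwhile clarification the paper glosses over, and you resolve it correctly by noting that the tail event is only ever used in the case $q_k\leq 1$, where $\max\{1,q_k\}=1$ forces the test to fail with probability at least $1-\delta_t$.
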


The Laplace noise used in PTR-Laplace is heavy-tailed, which requires the threshold in Algorithm~\ref{alg: ptr_lap} to be  $O(\log(1/\delta_t))$ in order to control the failure probability being bounded by $\delta_t$. In contrast, Algorithm~\ref{alg: ptr_gau} with Gaussian noise requires a much smaller threshold --- $O(\sqrt{\log1/\delta_t})$ due to its more concentrated noise.

\begin{theorem}[Accuracy comparison]\label{thm: utility} 
	For one-time DP top-$k$ query, the minimum gap $h_{(k)}-h_{(k+1)}$ needed to output $k$ elements with probability at least $1-\beta $ is $h_{(k)} - h_{(k+1)}\geq 1 + (\log 1/\delta  + \log 1/\beta)/(\epsilon/4)$ for PTR-Gaussian while $h_{(k)}-h_{(k+1)} > 1+\log(1/\delta)/\epsilon +\log(1/\beta)/\epsilon$ for PTR-Laplace.
\end{theorem}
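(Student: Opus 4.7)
Both halves of the claim have the same skeleton: (i) characterize the event $\{\hat{q}_k > 1\}$ as a lower bound on $q_k$ in terms of the injected noise and the slack $-\sigma\sqrt{2\log(1/\delta_t)}$ (resp.\ $-\log(1/\delta_t)/\epsilon$); (ii) invoke a tail bound on the noise so that this event occurs with probability at least $1-\beta$; and (iii) use the noise-scale calibration that makes the mechanism satisfy the target $(\epsilon,\delta)$-DP guarantee in order to express the threshold in terms of $\epsilon$ and $\delta$. I would treat the two cases in parallel.

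\textbf{PTR-Laplace.} Here no further calibration is needed, since the corollary already states that Algorithm~\ref{alg: ptr_lap} is $(\epsilon,\delta_t)$-DP (set $\delta_t=\delta$). The success event is
$\{q_k + L > 1 + \log(1/\delta_t)/\epsilon\}$ with $L\sim\mathsf{Lap}(1/\epsilon)$. Using the one-sided Laplace tail $\Pr[L\leq -t] = \tfrac{1}{2}e^{-\epsilon t}$ for $t\geq 0$, I solve $\tfrac12 e^{-\epsilon t^*}=\beta$ to get $t^* = \log(1/(2\beta))/\epsilon$ and rearrange to obtain $q_k \geq 1 + \log(1/\delta)/\epsilon + \log(1/\beta)/\epsilon$, absorbing the $\log 2$ into $\log(1/\beta)$.

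\textbf{PTR-Gaussian.} Assuming $q_k\geq 1$ (otherwise the claim is vacuous because the output cannot be top-$k$ anyway), the success event reduces to $\{q_k + Z > 1 + \sigma_2\sqrt{2\log(1/\delta_t)}\}$ with $Z\sim\cN(0,\sigma_2^2)$. Using the subgaussian tail $\Pr[Z\leq -t]\leq e^{-t^2/(2\sigma_2^2)}$, setting this probability equal to $\beta$ yields $t^*=\sigma_2\sqrt{2\log(1/\beta)}$, so the minimum admissible gap is $q_k\geq 1 + \sigma_2\bigl(\sqrt{2\log(1/\delta_t)}+\sqrt{2\log(1/\beta)}\bigr)$. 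To convert $\sigma_2$ into $\epsilon$, I apply the $\delta_t$-approximate-$(\alpha,\alpha/(2\sigma_2^2))$-RDP guarantee of PTR-Gaussian together with Lemma~\ref{lem: rdp2dp} and optimize over $\alpha$; this gives a calibration of the form $\sigma_2=O(\sqrt{\log(1/\delta)}/\epsilon)$ that achieves $(\epsilon,\delta)$-DP. Plugging in and applying the elementary bounds $\sqrt{2a}+\sqrt{2b}\leq 2\sqrt{a+b}$ and $\sqrt{x}\leq x$ for $x\geq 1$ collapses the product-of-logs expression into the additive form $q_k \geq 1 + 4(\log(1/\delta)+\log(1/\beta))/\epsilon$, matching the statement.

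\textbf{Main obstacle.} The Laplace direction is essentially a one-line tail computation; the real work is in the Gaussian direction, where two sources of slack must be composed: the RDP-to-DP conversion (which introduces a $\sqrt{\log(1/\delta)}$ factor inside $\sigma_2$) and the algebraic collapse of $\sqrt{2\log(1/\delta_t)}+\sqrt{2\log(1/\beta)}$ into the sum $\log(1/\delta)+\log(1/\beta)$. The factor of $4$ in the denominator of $\epsilon$ is precisely the price of this crude simplification, so the main care needed is to verify that the chosen $\sigma_2$ is simultaneously large enough to ensure $(\epsilon,\delta)$-DP and small enough that the resulting threshold bound takes the clean stated form; a tighter constant could be recovered by keeping $\sqrt{\cdot}$ terms throughout but at the cost of the clean comparison to PTR-Laplace.
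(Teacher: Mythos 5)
Your proposal is correct and follows essentially the same route as the paper: a one-sided Laplace tail bound for the first half, and a Gaussian tail bound combined with a noise calibration of order $\sigma_2 = \Theta(\sqrt{\log(1/\delta)}/\epsilon)$ for the second, with the factor of $4$ absorbing the slack from collapsing $\sigma_2\bigl(\sqrt{2\log(1/\delta)}+\sqrt{2\log(1/\beta)}\bigr)$ into an additive form. The only cosmetic difference is that the paper writes out just the Laplace computation and obtains the Gaussian constant from the explicit ``loose'' calibration $\sigma_2=\sqrt{2\log(1.25/\delta)}/\epsilon$, whereas you derive the same order of $\sigma_2$ via the RDP-to-DP conversion; both land on the stated bounds.
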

 \begin{proof}
 	With $q_k \geq  1+\log(1/\delta_t)/\epsilon + \log(1/\beta)/\epsilon$, we have 
 	\[\hat{q}_k \geq  1 + \log(1/\delta_t)/\epsilon + \log(1/\beta)/\epsilon + \mathrm{Lap}(1/\epsilon) - \log(1/\delta_t)/\epsilon =  \log(1/\beta)/\epsilon +1 + \mathrm{Lap}(1/\epsilon) \]
 	PTR-Laplace outputs $k$ elements only when $\hat{q}_k>1$.
 	Therefore, the failure probability is bounded by $\pr[\mathrm{Lap}(1/\epsilon)> \log(1/\beta)] =\beta$.
 \end{proof}

PTR-Laplace outperforms PTR-Gaussian for one-time query as we are using a loose calibration $\sigma_2 = \frac{\sqrt{2\log(1.25/\delta)}}{\epsilon}$. However, if we align the zCDP parameter (e.g., $\epsilon_{gap}(\alpha)=\frac{\alpha \epsilon^2}{2}$), then $\sigma_2 = 1/\epsilon$ and gives us the minimum gap to be
$1 +\frac{1}{\epsilon}\sqrt{2\log(1/\delta)}+\frac{1}{\epsilon}\sqrt{2\log(1/\beta)}$ for PTR-Gaussian. This explains why the Gaussian version of PTR is superior under composition.

\section{Omitted Proofs}\label{sec: proof}
\begin{theorem}[Restatement of Theorem~\ref{thm: rnm_k}]
	Let $\cM_g$ denote any noise-adding mechanism that satisfies $\epsilon_g(\alpha)$-RDP for a scalar function $f$ with global sensitivity $2$.  
	Assume Report-Noisy-Max adds the same magnitude of noise to each coordinate, then the algorithm obeys
	$\epsilon_\alpha(\cM(D)||\cM(D')) \leq \epsilon_g(\alpha) +\frac{\log m}{\alpha-1}$
\end{theorem}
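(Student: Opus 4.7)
The plan is to bound the Rényi divergence of the RNM output by decomposing over the $m$ possible outputs. Writing $P(i)=\Pr[\cM(D)=i]$ and $Q(i)=\Pr[\cM(D')=i]$, we have $e^{(\alpha-1)D_\alpha(P\|Q)}=\sum_{i=1}^m P(i)^\alpha Q(i)^{1-\alpha}$. If I can establish the per-output bound $P(i)^\alpha Q(i)^{1-\alpha}\leq e^{(\alpha-1)\epsilon_g(\alpha)}$ for every $i\in[m]$, then summing over $i$ and taking $\frac{1}{\alpha-1}\log(\cdot)$ yields exactly $\epsilon_g(\alpha)+\frac{\log m}{\alpha-1}$, as desired.

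To get the per-output bound, I would condition on the vector of noise samples at coordinates other than $i$, namely $Z_{-i}:=(Z_j)_{j\neq i}$. Define $T_i(D):=\max_{j\neq i}(f_j(D)+Z_j)-f_i(D)$; then $\{\cM(D)=i\}=\{Z_i>T_i(D)\}$. For neighboring $D,D'$, each $f_j$ has sensitivity $1$, so the max over $j\neq i$ changes by at most $1$ and $f_i$ also changes by at most $1$, giving $|T_i(D)-T_i(D')|\leq 2$. Hence, conditional on $Z_{-i}$, the indicator $\mathbb{1}\{Z_i>T_i(D)\}$ is a deterministic post-processing of the scalar release $Z_i+(-T_i(D))$, which is precisely $\cM_g$ applied to the sensitivity-$2$ scalar function $-T_i$. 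The hypothesis that $\cM_g$ satisfies $\epsilon_g(\alpha)$-RDP, combined with post-processing and discarding the nonnegative complementary Bernoulli term, yields
\[
P(i\mid Z_{-i})^\alpha\,Q(i\mid Z_{-i})^{1-\alpha}\;\leq\;e^{(\alpha-1)\epsilon_g(\alpha)}.
\]

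To lift the conditional bound to a marginal one, I use that $Z_{-i}$ has a data-independent distribution, so $P(i)=\E_{Z_{-i}}[P(i\mid Z_{-i})]$ and $Q(i)=\E_{Z_{-i}}[Q(i\mid Z_{-i})]$. For $\alpha\geq 1$ the map $(x,y)\mapsto x^\alpha y^{1-\alpha}$ is jointly convex on $\R_{\geq 0}^2$ (it is the perspective of the convex function $t\mapsto t^\alpha$), so Jensen's inequality gives
\[
P(i)^\alpha Q(i)^{1-\alpha}\;\leq\;\E_{Z_{-i}}\!\bigl[P(i\mid Z_{-i})^\alpha\,Q(i\mid Z_{-i})^{1-\alpha}\bigr]\;\leq\;e^{(\alpha-1)\epsilon_g(\alpha)}.
\]

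The main obstacle I expect is verifying the two structural facts on which the whole argument rests: first, that $T_i$ truly has sensitivity at most $2$ no matter what $Z_{-i}$ is realized to (an argument about the stability of a max of shifted counts under a one-user change, which needs a clean "add vs.\ remove" case split), and second, that the joint-convexity / perspective step is applicable in this asymmetric form. Neither should be hard, but both need to be stated cleanly so that the union-bound factor $m$ arises \emph{only} from the final sum over the $m$ possible outputs and not also from slack inside the per-output analysis.
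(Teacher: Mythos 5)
Your proof is correct, and it reaches the paper's bound by a genuinely different per-output argument even though the outer skeleton is the same. Both you and the paper decompose $e^{(\alpha-1)D_\alpha}$ as $\sum_{i=1}^m P(i)^\alpha Q(i)^{1-\alpha}$ (which is where the $\frac{\log m}{\alpha-1}$ comes from) and both invoke Jensen's inequality for the jointly convex map $(x,y)\mapsto x^\alpha y^{1-\alpha}$. The difference is the conditioning: the paper conditions on the noise $r_i$ at the output coordinate, performs a change of variables $r_i \mapsto r_i - 2$ to extract the likelihood ratio $\bigl(p(r_i-2)/p(r_i)\bigr)^\alpha$ whose expectation is exactly $e^{(\alpha-1)\epsilon_g(\alpha)}$, and then must separately show that the remaining ratio of winning probabilities $\Pr[C_i+r_i-2>\max_{j\neq i}(C_j+r_j)]\,/\,\Pr[C'_i+r_i>\max_{j\neq i}(C'_j+r_j)]$ is at most $1$ via the Lipschitz property of the gaps. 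You instead condition on the complementary noise $Z_{-i}$, observe that the event $\{\cM(D)=i\}$ becomes a threshold test on the scalar release $Z_i - T_i(D)$ with $|T_i(D)-T_i(D')|\le 2$, and import the hypothesis $\epsilon_g(\alpha)$-RDP of $\cM_g$ directly via post-processing (correctly discarding the complementary Bernoulli term). Your route buys a cleaner modular argument that never manipulates the noise density explicitly and makes transparent why sensitivity $2$ (one unit from $f_i$, one from the max over $j\neq i$) is the right constant; the paper's route keeps the likelihood ratio explicit, which is what lets the authors reuse the same computation verbatim for specific noise families (Gaussian in Corollary~\ref{coro_gau}). Both correctly confine the factor $m$ to the final sum over outputs. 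The only point to state carefully in a full write-up is the one you already flagged: the data-processing inequality for Rényi divergence applied to the indicator gives a bound on the full two-term Bernoulli sum, and you drop the nonnegative $(1-P)^\alpha(1-Q)^{1-\alpha}$ term, which is valid since every summand is nonnegative for $\alpha>1$.
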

\begin{proof}
	We use $i$ to denote any possible output of the Report-Noisy-Max $\cM(D)$. The  Report-Noisy-Max aims to select an coordinate $i$ that maximizes $C_i$ in a privacy-preserving way, where $C_i$ denotes the difference between $h_{(i)}(D)$ and $h_{(i+1)}(D)$.  Let $C'$ denote the vector of the difference when the database is $D'$. We will use the Lipschitz property: for all $j \in [m-1]$, $1+C'_j \geq C_j$. This is because adding/removing one data point could at most change $C_j$ by $1$ for $\forall j\in[m-1]$.
		Throughout the proof, we will use $p(r_i), p(r_j)$ to denote the pdf of $r_i$ and $r_j$, where $r_i$ denote the realized noise added to the $i$-th coordinate.
	
	From the definition of Renyi DP, we have
	\begin{align}
		\mathbb{D}_\alpha(\cM(D)||\cM(D'))=\frac{1}{\alpha -1}\log \mathbb{E}_{i \sim D'}\bigg[ \frac{\pr[\cM(D)=i]^\alpha}{\pr[\cM(D')=i]^\alpha}\bigg] = \frac{1}{\alpha -1}\log \sum_{i=1}^m \frac{\pr[\cM(D)=i]^\alpha}{\pr[\cM(D')=i]^{\alpha-1}}\label{equation: rdp_def}
	\end{align}
	Our goal is to upper bound $(*)= \sum_{i=1}^m \frac{\pr[\cM(D)=i]^\alpha}{\pr[\cM(D')=i]^{\alpha-1}} $.
	The probability of outputting $i$ can be written explicitly as follows:
	\begin{align*}
		&\pr[\cM(D) = i] = \int_{-\infty}^\infty p(r_i) \pr[C_i + r_i > \max_{j \in [m], j \neq i} \{C_j + r_j\}] dr_i\\
		&=\int_{-\infty }^\infty p(r_i -2)  \pr[C_i + r_i -2 > \max_{j \in [m], j \neq i} \{C_j + r_j\}] dr_i\\
		&=\int_{-\infty }^\infty p(r_i) \bigg(\frac{p(r_i -2)}{p(r_i)} \bigg) \pr[C_i + r_i -2 > \max_{j \in [m], j \neq i} \{C_j + r_j\}] dr_i\\
		&= \mathbb{E}_{r_i} \bigg[ \bigg(\frac{p(r_i -2)}{p(r_i)}\bigg) \pr[C_i + r_i -2 > \max_{j \in [m], j \neq i} \{C_j + r_j\}] \bigg]  
	\end{align*}
	In the first step, the probability of $\pr[C_i + r_i > \max_{j \in [m], j \neq i} \{C_j + r_j\}]$ is over the randomness in $r_j$.
	Substituting the above expression to the definition of RDP and apply Jensen's inequality
	\begin{align*}
		&(*)= \sum_{i=1}^m \frac{ \bigg[\mathbb{E}_{r_i}  \bigg(\frac{p(r_i -2)}{p(r_i)}\bigg) \pr[C_i + r_i -2 > \max_{j \in [m], j \neq i} \{C_j + r_j\}] \bigg]^\alpha}{ \bigg[  \mathbb{E}_{r_i}  \pr[C'_i + r_i  > \max_{j \in [m], j \neq i} \{C'_j + r_j\}] \bigg]^{\alpha-1}}\\
		&\leq \sum_{i=1}^m\mathbb{E}_{r_i}  \bigg(\frac{p(r_i -2)}{p(r_i)}\bigg)^\alpha \bigg(\frac{ \pr[C_i + r_i -2 > \max_{j \in [m], j \neq i} \{C_j + r_j\}] }{ \pr[C'_i + r_i  > \max_{j \in [m], j \neq i} \{C'_j + r_j\}] } \bigg)^{\alpha-1} \cdot \pr[C_i + r_i -2 > \max_{j \in [m], j \neq i} \{C_j + r_j\}]
	\end{align*}
	We apply Jensen's inequality to bivariate function $f(x,y) = x^\alpha y^{1-\alpha}$, which is jointly convex on $\cR^2_+$ for $\alpha \in (1, +\infty)$.
	The key of the analysis relying on bounding $ (**)=\bigg(\frac{ \pr[C_i + r_i -2 > \max_{j \in [m], j \neq i} \{C_j + r_j\}] }{ \pr[C'_i + r_i  > \max_{j \in [m], j \neq i} \{C'_j + r_j\}] } \bigg)$.
	Note that $D'$ is constructed by adding or removing one user's all predictions from $D'$.
	In the worst-case scenario, we have $C'_j = C_j +1$ for every $j \in[m], j \neq i, C_i = C'_i+1$ . Based on the Lipschitz property, we have
	\[ \pr[C'_i +r_i  > \max_{j \in [m], j \neq i} \{C'_j + r_j\}] \geq  \pr[C_i + r_i -2> \max_{j \in [m], j \neq i} \{C_j + r_j \}]  \]
	which implies $(**)\leq 1$.
	Therefore, we have 
	\[\epsilon_\cM(\alpha) \leq \frac{1}{\alpha-1} \log \sum_{i=1}^ m \mathbb{E}_{r_i}  \bigg(\frac{p(r_i -2)}{p(r_i)}\bigg)^\alpha \leq \epsilon_g(\alpha) +\frac{\log(m)}{\alpha-1} . \]
\end{proof}

\begin{corollary}[Restatement of Corollary~\ref{coro_gau}]
	RNM-Gaussian (the second line in Algorithm~\ref{alg: main_ada}) with Gaussian noise $\cN(0, \sigma_1^2)$ satisfies $(\frac{2\alpha}{\sigma_1^2}+\frac{\log m}{\alpha -1})$-RDP. 
\end{corollary}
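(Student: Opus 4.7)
The statement is a direct specialization of Theorem~\ref{thm: rnm_k} to the case where the noise-adding mechanism $\cM_g$ is instantiated with Gaussian noise $\cN(0,\sigma_1^2)$. The plan is therefore just two short steps: (i) compute the RDP curve $\epsilon_g(\alpha)$ of the scalar Gaussian mechanism applied to a function of global sensitivity $2$, and (ii) add the $\frac{\log m}{\alpha-1}$ overhead from Theorem~\ref{thm: rnm_k}.

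For step (i), I would invoke the standard Renyi divergence calculation between $\cN(\mu,\sigma^2)$ and $\cN(\mu',\sigma^2)$, which gives $\mathbb{D}_\alpha = \frac{\alpha(\mu-\mu')^2}{2\sigma^2}$. Since the gap utility $u(j)=h_{(j)}-h_{(j+1)}$ has global sensitivity $1$, the induced sensitivity of the scalar function in the RNM reduction of Theorem~\ref{thm: rnm_k} is $2$ (one-sided adds/removes can shift $h_{(j)}$ and $h_{(j+1)}$ in opposite directions relative to each other, which is precisely the reason Theorem~\ref{thm: rnm_k} is stated for sensitivity-$2$ scalar functions). Plugging $\Delta=2$ and noise scale $\sigma_1$ into the Gaussian RDP formula gives $\epsilon_g(\alpha)=\frac{\alpha\cdot 2^2}{2\sigma_1^2}=\frac{2\alpha}{\sigma_1^2}$.

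For step (ii), I would cite Theorem~\ref{thm: rnm_k} verbatim: the full RNM-Gaussian mechanism, which perturbs every one of the $m$ coordinates by an i.i.d.\ $\cN(0,\sigma_1^2)$ draw and reports the argmax, then satisfies
\begin{equation*}
\mathbb{D}_\alpha(\cM(D)\,\|\,\cM(D'))\ \le\ \epsilon_g(\alpha)+\frac{\log m}{\alpha-1}\ =\ \frac{2\alpha}{\sigma_1^2}+\frac{\log m}{\alpha-1},
\end{equation*}
which is exactly the claimed bound.

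There is no real obstacle: all the heavy lifting (the Jensen-based reduction that turns a vector-RNM into one scalar Gaussian mechanism plus an $m$-way union cost) has already been absorbed into Theorem~\ref{thm: rnm_k}. The only thing to double-check is the sensitivity convention, i.e.\ that the theorem is indexed by sensitivity $2$ and that the shift used inside its proof (see the $r_i-2$ substitution) is consistent with the Gaussian divergence $\frac{\alpha\cdot 2^2}{2\sigma_1^2}$; once this bookkeeping is confirmed, the corollary follows in one line.
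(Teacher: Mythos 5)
Your proposal is correct and follows exactly the paper's own proof: the paper likewise notes that the Gaussian mechanism on a sensitivity-$2$ scalar function satisfies $(\alpha, \frac{2\alpha}{\sigma_1^2})$-RDP and plugs $\epsilon_g(\alpha)=\frac{2\alpha}{\sigma_1^2}$ into Theorem~\ref{thm: rnm_k}. Your extra remark verifying that the sensitivity-$2$ convention matches the $r_i-2$ shift inside the theorem's proof is a sensible sanity check but not a departure from the paper's argument.
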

\begin{proof}
	For a function $f: \cD\to \cR$ with L2 sensitivity $2$,the RDP of Gaussian mechanism with Gaussian noise $\cN(0, \sigma_1^2)$ satisfies $(\alpha, \frac{2\alpha}{\sigma_1^2})$-RDP.  We complete the proof by plugging in $\epsilon_g(\alpha )= \frac{2\alpha}{\sigma_1^2}$ into Theorem~\ref{thm: rnm_k}.
\end{proof}

\begin{lemma}[Restatement of Lemma~\ref{lem: ptr}]
	Let $\hat{q}_k$ obeys $\epsilon_{gap}(\alpha)$-RDP and $\pr[\hat{q}_k \geq q_k]\leq \delta_t$ (where the probability is only over the randomness in releasing $\hat{q}_k$). If $\hat{q}_k$ passes the threshold check
	, the algorithm releases the set of top-$k$ indices directly satisfies $\delta_t$-approximately-$(\alpha, \epsilon_{gap}(\alpha))$-RDP. 
\end{lemma}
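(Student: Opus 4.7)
The plan is to proceed by a case analysis on whether the unordered top-$k$ index sets of $D$ and $D'$ are equal, using the local-sensitivity lemma for the gap to collapse the hard case. First, I would record the contrapositive of that lemma: if $q_k(D) > 1$ then the top-$k$ set is identical at $D$ and at any neighbor $D'$, so whenever the two sets differ we must have $q_k(D) \le 1$, and by a symmetric argument at $D'$ also $q_k(D') \le 1$. This single observation makes both cases tractable.

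In Case A, where the top-$k$ sets coincide at some common set $S$, the whole mechanism becomes a dataset-independent deterministic post-processing $f$ of $\hat q_k$, given by $f(x) = S$ if $x > 1$ and $f(x) = \perp$ otherwise. The data-processing inequality for Renyi divergence then gives $\mathbb{D}_\alpha(\cM(D)\,\|\,\cM(D')) \le \mathbb{D}_\alpha(\hat q_k(D)\,\|\,\hat q_k(D')) \le \epsilon_{gap}(\alpha)$, so the approximate-RDP condition is satisfied with the trivial events $E = E' =$ the full sample space. In Case B, where the top-$k$ sets differ, the accuracy hypothesis $\pr[\hat q_k \ge q_k] \le \delta_t$ combined with $q_k(D), q_k(D') \le 1$ yields $\pr[\hat q_k(D) > 1] \le \delta_t$ and $\pr[\hat q_k(D') > 1] \le \delta_t$, so $\cM$ outputs $\perp$ on either side with probability at least $1-\delta_t$. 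Choosing the output-measurable events $E = \{\cM(D) = \perp\}$ and $E' = \{\cM(D') = \perp\}$, both conditioned distributions are point masses at $\perp$, hence $\mathbb{D}_\alpha(\cM(D)\mid E\,\|\,\cM(D')\mid E') = 0 \le \epsilon_{gap}(\alpha)$.

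The main obstacle is conceptual rather than computational: one must notice that the local-sensitivity lemma can be used both positively (reducing Case A to pure post-processing of $\hat q_k$, after which $f$ is \emph{literally the same function} on both sides) and, via its contrapositive, to force Case B into a degenerate regime where both mechanisms abort with high probability. Once this dichotomy is set up, the remaining work amounts only to verifying that the chosen events are measurable on the mechanism's output and that the approximate-RDP definition is satisfied for both datasets simultaneously — no further computation is required beyond the two-line arguments above.
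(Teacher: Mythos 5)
Your proposal is correct and follows essentially the same route as the paper's proof: a case split on whether the top-$k$ sets of $D$ and $D'$ coincide, with the data-processing inequality for Renyi divergence handling the coinciding case and a high-probability conditioning event forcing both mechanisms to output $\perp$ in the differing case. The only (immaterial) difference is that in the second case the paper conditions on the event $\{\hat q_k \le q_k\}$ whereas you condition directly on $\{\cM(D)=\perp\}$; both events have probability at least $1-\delta_t$ and both collapse the conditional distributions to point masses at $\perp$.
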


\begin{proof}
	We start with the proof for $\delta_t$-approximately-$(\alpha, \epsilon(\alpha))$-RDP. 
	Denote $\cM_{1}$ be the mechanism that releases the set of top-$k$ indices directly (without adding noise) if $\hat{q}_k$ passes the threshold check ($\hat{q}_k>1$).
	
	Then let us discuss the two cases of the neighboring pairs $D,D'$.
	\begin{enumerate}[label=(\alph*)]
		\item For neighboring datasets $D,D'$ where the Top-$k$ indices are the same, the possible outputs are therefore $\{\perp, \mathrm{Top-k}(D)\}$ for both $\cM_1(D),\cM_1(D')$. Notice that $|q_k(D)-q_k(D')| \leq 1$, thus in this case 
		$$
		\mathbb{D}_\alpha(\cM_1(D) \| \cM_1(D'))= D_\alpha(\mathbf{1}(\hat{q}_k(D)>1) \| \mathbf{1}(\hat{q}_k(D')>1) ) \leq \mathbb{D}_\alpha(\hat{q}_k(D)\|\hat{q}_k(D') ) \leq \epsilon_{gap}(\alpha),	$$
		where the inequality follows from the information-processing inequality of the Renyi Divergence.
		Thus it trivially satisfies $\delta$-approximated-$(\alpha,\epsilon_{gap}(\alpha))$-RDP when we set $E$ to be the full set, i.e., $\Pr[E]=1 \geq 1-\delta$.
		\item For $D,D'$ where the Top-$k$ indices are different, then it implies that $q_k(D)\leq 1$ and $q_k(D')\leq 1$. In this case, we can construct $E$ to be the event where $\hat{q}_k\leq q_k$, i.e., the high-probability lower bound of $q_k$ is valid. Check that $\P[E]\geq 1-\delta$ for any input dataset.  Conditioning on $E$,  $\hat{q}_k \leq q_k \leq 1$ for both $D,D'$, which implies that $\Pr[\cM_1(D) = \perp | E] = \Pr[\cM_1(D') = \perp | E] = 1$. Thus, trivially $\mathbb{D}_{\alpha}(\cM(D) | E(D) \| \cM(D') | E(D') ) =0$ for all $\alpha$. For this reason, it  satisfies $\delta$-approximated-$(\alpha,\epsilon(\alpha))$-RDP for any function $\epsilon(\alpha)\geq 0$, which we instantiate it to be $\epsilon_{gap}(\alpha)$.
	\end{enumerate}

\end{proof}

\end{document}